\newcommand\E{\mathbb{E}}
\newcommand\B{\mathcal{B}}
\newcommand\D{\mathcal{D}}
\newcommand\KL{\text{KL}}
\newtheorem{theorem}{Theorem}[section]
\newtheorem{lemma}[theorem]{Lemma}
\newcommand{\PreserveBackslash}[1]{\let\temp=\\#1\let\\=\temp}
\newcolumntype{C}[1]{>{\PreserveBackslash\centering}p{#1}}
\newcolumntype{R}[1]{>{\PreserveBackslash\raggedleft}p{#1}}
\newcolumntype{L}[1]{>{\PreserveBackslash\raggedright}p{#1}}
\ifcvprfinal\pagestyle{empty}\fi
\begin{document}

%%%%%%%%% TITLE
\title{S3VAE: Self-Supervised Sequential VAE \\for Representation Disentanglement and  Data Generation}

\author{Yizhe Zhu$^{1,2}$, \quad Martin Renqiang Min$^{1}$, \quad Asim Kadav$^{1}$, \quad Hans Peter Graf$^{1}$\\
	  yizhe.zhu@rutgers.edu,  \quad \{renqiang, asim, hpg\}@nec-labs.com
	  \\
	$^{1}$NEC Labs America, $^{2}$Department of Computer Science, Rutgers University
}

\maketitle
%\thispagestyle{empty}

%%%%%%%%% ABSTRACT
\begin{abstract}
    We propose a sequential variational autoencoder to learn disentangled representations of sequential data (e.g., videos and audios) under self-supervision. Specifically, we exploit the benefits of some readily accessible supervisory signals from input data itself or some off-the-shelf functional models and accordingly design auxiliary tasks for our model to utilize these signals. With the supervision of the signals, our model can easily disentangle the representation of an input sequence into static factors and dynamic factors (i.e., time-invariant and time-varying parts). Comprehensive experiments across videos and audios verify the effectiveness of our model on representation disentanglement and generation of sequential data, and demonstrate that, our model with self-supervision performs comparable to, if not better than, the fully-supervised model with ground truth labels, and outperforms state-of-the-art unsupervised models by a large margin.
\end{abstract}

\section{Introduction}
Representation learning is one of the essential research problems in machine learning and computer vision~\cite{bengio2013representation}. Real-world sensory data such as videos, images, and audios are often in the form of high dimensions. Representation learning aims to map these data into a low-dimensional space to make it easier to extract semantically meaningful information for downstream tasks such as classification and detection. Recent years have witnessed rising interests in disentangled representation learning, which tries to separate the underlying factors of observed data variations such that each factor exclusively interprets one type of semantic attributes of sensory data. 
The representation of sequential data is expected to be disentangled into time-varying factors and time-invariant factors. For video data, the identity of a moving object in a video is regarded as time-invariant factors, and the motion in each frame is considered as time-varying ones ~\cite{li2018disentangled}. For speech data, the representations of
the timbre of speakers and the linguistic contents are expected to be disentangled~\cite{hsu2017unsupervised}. There are several benefits
of learning disentangled representations. First, the models
that produce disentangled representations are more explainable. Second, disentangled representations make it easier and more efficient to manipulate data generation, which has potential applications in entertainment industry, training data synthesis~\cite{zhu2018generative, zhu2019learning} and several downstream tasks~\cite{kay2017kinetics, Fan2019VideoSal, fan2018SOC, ren2015faster, zhu2019semantic}.

\begin{figure}
	\begin{center}
		\includegraphics[width=0.45\textwidth]{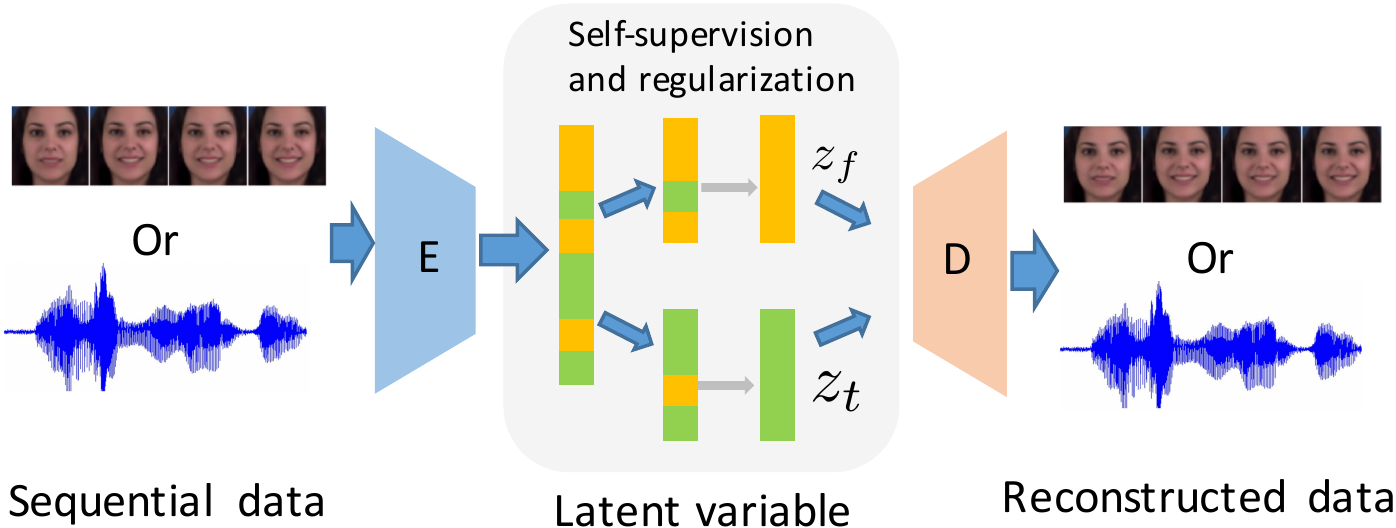}
	\end{center}
	\label{framework}
	\caption{Self-supervision and regularizations enforce the latent variable of our sequential VAE to be disentangled into a static representation $\bm{z}_f$ and a dynamic representation $\bm{z}_t$.   }
	\vspace{-1.5em}
\end{figure}

Despite the vast amount of works~\cite{Higgins2017betaVAELB, kim2018disentangling,chen2018isolating,esmaeili2018structured,esmaeili2018hierarchical,chen2016infogan,jeon2018ib} on disentangled representations of static data (mainly image data), fewer works~\cite{hsu2017unsupervised,li2018disentangled,he2018probabilistic,sun2018two} have explored representation disentanglement for sequential data generation. 
For unsupervised models,  FHVAE~\cite{hsu2017unsupervised} and DSVAE~\cite{li2018disentangled} elaborately designed model architectures and factorized latent variables into static and dynamic parts.
These models may well handle simple data forms such as synthetic animation data but fail when dealing with realistic ones as we will show later. 
Besides, as pointed out in \cite{locatello2018challenging}, unsupervised representation disentanglement is impossible without inductive biases. Without any supervision, the performance of disentanglement can hardly be guaranteed and greatly depends on the random seed and the dimensionality of latent vectors set in the models.
On the other hand, several works~\cite{he2018probabilistic,sun2018two} resort to utilizing label information or attribute annotation as strong supervision for disentanglement. For instance, 
VideoVAE~\cite{he2018probabilistic} leveraged  holistic attributes to constrain latent variables. Nevertheless, the costly annotation of data is essential for these models and prevents them from being deployed to most real-world applications, in which a tremendous amount of unlabeled data is available. 

To alleviate the drawbacks of both unsupervised and supervised models discussed above, this work tackles representation disentanglement for sequential data generation utilizing self-supervision. 
In self-supervised learning, various readily obtainable supervisory signals have been explored for representation learning of images and videos, employing auxiliary data such as the ambient sounds in videos~\cite{owens2016ambient,arandjelovic2017look}, the egomotion of cameras~\cite{agrawal2015learning,jayaraman2015learning}, the geometry cue in 3D movies~\cite{gan2018geometry}, and off-the-shelf functional models for visual tracking~\cite{wang2015unsupervised}, and  optical flow~\cite{pathak2017learning, Wang_2019_CVPR}. However, how self-supervised learning benefits representation disentanglement of sequential data has barely been explored.

In this paper, we propose a sequential variational autoencoder (VAE), a recurrent version of VAE, for sequence generation. In the latent space, the representation is disentangled into time-invariant and time-varying factors. We address the representation disentanglement by exploring intrinsic supervision signals, which can be readily obtained from both data itself and off-the-shelf methods, and accordingly design a series of auxiliary tasks.  
Specifically, on one hand, to exclude dynamic information from time-invariant variable, we exploit the temporal order of the sequential data and expect the time-invariant variable of the temporally shuffled data to be close to if not the same as that of the original data.
On the other hand, the time-varying variable is expected to contain dynamic information in different modalities. For video data, we allow it to predict the location of the largest motion in every frame, which can be readily inferred from optical flow. For audio data, the volume in each segment as an intrinsic label is leveraged as the supervisory signal. To further encourage the representation disentanglement, the mutual information between static and dynamic variables are minimized as an extra regularization. 

To the best of our knowledge, this paper is the first work to explicitly use auxiliary supervision to improve the representation disentanglement for sequential data. Extensive experiments on representation disentanglement and sequence data generation demonstrate that, with these multiple freely accessible supervisions, our model dramatically outperforms unsupervised learning-based methods and even performs better than fully-supervised learning-based methods in several cases. 

\section{Related Work}
\noindent\textbf{Disentangled Sequential Data Generation}
With the success of deep generative models, recent works~\cite{Higgins2017betaVAELB, kim2018disentangling,chen2018isolating, chen2016infogan,jeon2018ib} resort to variational autoencoders (VAEs)~\cite{kingma2013auto} and generative adversarial networks (GANs)~\cite{goodfellow2014generative} to learn a disentangled representation. Regularizations are accordingly designed.  $\beta$-VAE~\cite{Higgins2017betaVAELB} imposed a heavier penalty on the KL divergence term for a better disentanglement learning. Follow-up researches~\cite{kim2018disentangling, chen2018isolating} derived a Total Correlation (TC) from the KL term, and highlights this TC term as the key factor in  disentangled representation learning. In InfoGAN~\cite{chen2016infogan}, the disentanglement of a latent code $c$ is achieved by maximizing a mutual information lower-bound between $c$ and the generated sample $\tilde{x}$.

Several works involving disentangled representation have been proposed for video prediction.  Villegas \emph{et al.}~\cite{villegas2017decomposing} and Denton \emph{et al.}~\cite{denton2017unsupervised} designed two networks to encode pose and content separately at each timestep. 
Unlike video prediction, video generation from priors, which we perform in this work, is a much harder task since no frame is available for appearance and motion modeling in the generation phase. 

To handle video generation, VAEs are extended to a recurrent version~\cite{fabius2014variational, bayer2014learning, chung2015recurrent}.
However, these models do not explicitly consider static and dynamic representation disentanglement and fail to perform manipulable data generation. More recently, several works have proposed VAEs with factorized latent variables. 
FHVAE~\cite{hsu2017unsupervised} presented a factorized hierarchical graphical model that imposes sequence-dependent priors and sequence-independent priors to different sets
of latent variables in the context of speech data, but did not take advantage of the sequential prior.   Combining the merits of recurrent VAE and FHVAE,
DSVAE \cite{li2018disentangled} is capable of disentangling latent factors by factorizing them into time-invariant and time-dependent parts and applies an LSTM sequential prior to keep a better sequential consistency for sequence generation. Although with elaborately designed complex architectures, these models may only perform decently on representation disentanglement of simple data, the disentanglement performance degrades rapidly when the complexity of data increases.
In contrast, our work explores both model and regularization designs for representation disentanglement and sequential data generation. Our model fully factorizes the latent variables to time-invariant and time-varying parts, and both the posterior and the prior of the time-varying variable are modeled by LSTM for dynamic consistency. The auxiliary tasks with readily accessible supervisory signals are designed to regularize and encourage representation disentanglement.

\begin{figure*}
	\centering
	\includegraphics[width=0.78\textwidth]{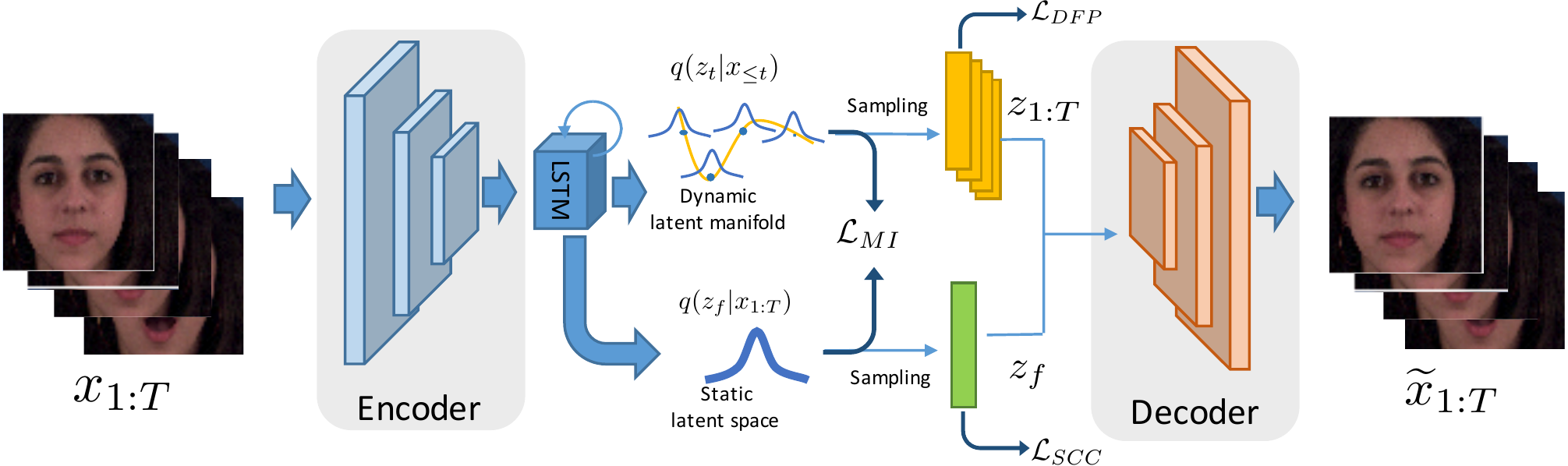}
	\vspace{-0.8em}
	\caption{The framework of our proposed model in the context of video data. Each frame of a video $\bm{x}_{1:T}$ is fed into an encoder to produce a sequence of visual features, which is then passed through an LSTM module to obtain the manifold posterior of a dynamic latent variable $\{q(\bm{z}_t|\bm{x}_{\leq t})\}_{t=1}^T$ and the posterior of a static latent variable $q(\bm{z}_f|\bm{x}_{1:T})$. The static and dynamic representations $\bm{z}_f$ and $\bm{z}_{1:T}$ are sampled from the corresponding posteriors and concatenated to be fed into a decoder to generate reconstructed sequence $\widetilde{\bm{x}}_{1:T}$. Three regularizers are imposed on dynamic and static latent variables to encourage the representation disentanglement. } 
	\vspace{-1.7em}
		\label{framework}
\end{figure*}

\noindent\textbf{Self-Supervised Learning}
The concept of self-supervised learning traces back to the autoencoder~\cite{hinton1994autoencoders}, which uses the input itself as supervision to learn the representation. Denoising autoencoder\cite{vincent2008extracting} makes the learned representations robust to noise and partial corruption of the input pattern by adding noise to the input. Recent years have witnessed the booming interest in self-supervised learning. The sources of supervisory signals can be roughly categorized into three classes. (a) \textit{\textbf{Intrinsic labels}}:  Doersch \emph{et al.}~\cite{doersch2015unsupervised} explored the use of spatial context in images, and Noroozi \emph{et al.}~\cite{noroozi2016unsupervised} trained a model to solve Jigsaw puzzles as a pretext task. Several works~\cite{zhang2016colorful, larsson2017colorization} showed that colorizing a gray-scale photograph can be utilized as a powerful pretext task for visual understanding. Temporal information of video is another  readily accessible supervisory signal. \cite{misra2016shuffle} trained a model to determine whether a sequence of frames from a video is in the correct temporal order and \cite{jing2019self} made the model learn to arrange the permuted 3D spatiotemporal crops.
(b) \textit{\textbf{Auxiliary data}}:  
Agrawal~\emph{et al.}~\cite{agrawal2015learning} and Jayaraman~\emph{et al.}~\cite{jayaraman2015learning} exploited the freely available knowledge of camera motion as a supervisory signal for feature learning. Ambient sounds in videos~\cite{owens2016ambient, arandjelovic2017look} are used as a supervisory signal for learning visual models.  The geometry cue in 3D movies~\cite{gan2018geometry} is utilized for visual representation learning. (c) \textit{\textbf{Off-the-shelf tools}}:  Wang \emph{et al.}~\cite{wang2015unsupervised} leveraged the visual consistency of objects from a visual tracker in the video clips. \cite{pathak2017learning} used segments obtained by  motion-based segmentation based on optical flow as pseudo ground truth for the single-frame object segmentation.
Instead of learning visual features as in aforementioned methods, this work aims to achieve static and dynamic representation disentanglement for sequential data such as video and speech. To this end, we leverage supervisory signals from intrinsic labels to regularize the static representation and off-the-shelf tools to regularize the dynamic representation.

\section{Sequential VAE Model}

We start by introducing some notations and the problem definition. 
$\mathcal{D} = \{\bm{X}^i\}^N$ is given as a dataset that consists of $M$ i.i.d. sequences, where $\bm{X} \equiv\bm{x}_{1:T} = (x_1, x_2, ... x_T)$ denotes a sequence of $T$ observed variables, such as a video of $T$ frames or an audio of $T$ segments.   
We propose a sequential variational autoencoder model, where the sequence is assumed to be generated from latent variable $\bm{z}$ and $\bm{z}$ is factorized into two disentangled variables:  the time-invariant (or static) variable $\bm{z}_f$ and the time-varying (or dynamic) variables $\bm{z}_{1:T}$. 

\noindent\textbf{{Priors}} \quad
The prior of $\bm{z}_f$ is defined as a standard Gaussian distribution: $\bm{z}_f \sim \mathcal{N}(0, 1)$.  The time-varying latent variables $\bm{z}_{1:T}$ follow a sequential prior
$\bm{z}_t \hspace{0.3em}|\hspace{0.3em} \bm{z}_{< t} \sim \mathcal{N}( \bm{\mu}_t,  \text{diag}(\bm{\sigma}^2_t))$, 
where $[\bm{\mu}_t, \bm{\sigma}_t] = \phi^{prior}_{R}(\bm{z}_{<t})$, $\bm{\mu}_t, \bm{\sigma}_t$ are the parameters of the prior distribution conditioned on all previous time-varying latent variables $\bm{z}_{<t}$.  The model $\phi^{prior}_{R}$ can be parameterized as a recurrent network, such as LSTM~\cite{hochreiter1997long} or GRU~\cite{cho2014learning}, where the hidden state is updated temporally. The prior of $\bm{z}$ can be factorized as:
\vspace{-0.5em}
\begin{equation}
p(\bm{z}) = p(\bm{z}_f) p(\bm{z}_{1:T})= p(\bm{z}_f)\prod_{t=1}^{T}p(\bm{z}_t|\bm{z}_{<t}).
\end{equation}
\noindent\textbf{Generation} 
The generating distribution of time step $t$ is conditioned on $\bm{z}_f$ and $\bm{z}_t$:
$\bm{x}_t\hspace{0.3em}|\hspace{0.3em} \bm{z}_f, \bm{z}_t \sim \mathcal{N}( \bm{\mu}_{x,t},  \text{diag}(\bm{\sigma}^2_{x,t}))$,
where $[\bm{\mu}_{x,t}, \bm{\sigma}_{x,t}] = \phi^{Decoder}(\bm{z}_{f}, \bm{z}_{t})$ and the decoder $\phi^{Decoder}$ can be a highly flexible function such as a deconvolutional neural network~\cite{noh2015learning}. 

The complete generative model can be formalized by the factorization:
\vspace{-0.4em}
\begin{equation}
p(\bm{x}_{1:T}, \bm{z}_{1:T}, \bm{z}_f ) =   p(\bm{z}_f) \prod^T_{t=1} p(\bm{x}_t| \bm{z}_f, \bm{z}_t) p(\bm{z}_t|\bm{z}_{<t})
\end{equation}
\noindent\textbf{Inference} \quad
Our sequential VAE uses variational inference to approximate posterior distributions:
\begin{equation}
\bm{z}_f \sim \mathcal{N}( \bm{\mu}_f,  \text{diag}(\sigma^2_f)), \quad \bm{z}_t \sim \mathcal{N}( \bm{\mu}_t,  \text{diag}(\sigma^2_t)), 
\end{equation}
where $[\bm{\mu}_f, \bm{\sigma}_f] = \psi^{Encoder}_{f}(\bm{x}_{1:T})$ and  $[\bm{\mu}_t, \bm{\sigma}_t] = \psi^{Encoder}_{R}(\bm{x}_{\leq t})$. The static variable $\psi^{Encoder}_{f}$ is conditioned on the whole sequence while the dynamic variable is inferred by a recurrent encoder $\psi^{Encoder}_{R}$  and only conditioned on the previous frames. Our inference model is factorized as:
\begin{equation}
q(\bm{z}_{1:T}, \bm{z}_f|\bm{x}_{1:T}) = 
q(\bm{z}_f|\bm{x}_{1:T})\prod^T_{t=1} q(\bm{z}_t|\bm{x}_{\leq t})
\end{equation}

\noindent\textbf{Learning} \quad
The objective function of sequential VAE is a timestep-wise negative variational lower bound:
\begingroup\makeatletter\def\f@size{9}\check@mathfonts
\def\maketag@@@#1{\hbox{\m@th\large\normalfont#1}}
\begin{equation}
\begin{split}
\mathcal{L}_{VAE} &= \E_{q(\bm{z}_{1:T}, \bm{z}_f|\bm{x}_{1:T})}[-\sum_{t=1}^{T}\log p(\bm{x}_t|\bm{z}_{f},\bm{z}_{t})] +\\ &\KL(q(\bm{z}_f|\bm{x}_{1:T})||p(\bm{z}_f)) 
+ \sum_{t=1}^{T}\KL(q(\bm{z}_t|\bm{x}_{\leq t})||p(\bm{z}_t|\bm{z}_{< t}))
\end{split}
\end{equation}\endgroup

The schematic representation of our model is shown in Figure~\ref{framework}. Note that DSVAE also proposes a sequential VAE with disentangled representation, but it either independently infers $\bm{z}_t$ only based on the frame of each time-step without considering the continuity of dynamic variables and thus may generate inconsistent motion, or assumes the variational posterior of $\bm{z}_{t}$ depends on $\bm{z}_f$, implying that the variables are still implicitly entangled. In contrast, we model both the prior and the posterior of $\bm{z}_t$ by recurrent models independently, resulting in consistent dynamic information in synthetic sequences, and ensure full disentanglement of $\bm{z}_f$ and $\bm{z}_t$ by posterior factorization.

\section{Self-Supervision and Regularization}
Without any supervision, there is no guarantee that the time-invariant representation $\bm{z}_f$ and the time-varying representation $\bm{z}_t$ are disentangled.  In this section, we introduce a series of auxiliary tasks on the different types of representations as the regularization of our sequential VAE to achieve the disentanglement, where readily accessible supervisory signals are leveraged.

\subsection{Static Consistency Constraint}
To encourage the time-invariant representation $\bm{z}_f$ to exclude any dynamic information, we expect that $\bm{z}_f$ changes little when varying dynamic information dramatically.  To this end, we shuffle the temporal order of frames to form a shuffled sequence.  Ideally,  the static factors of the original sequence and shuffled sequence should be very close, if not equal, to one another. However, directly minimizing the distance of these two static factors will lead to very trivial solutions, e.g., the static factors of all sequences converge to the same value and do not contain any meaningful information. Thus, we randomly sample another sequence as the negative sample of the static factor. With a triple of static factors, we introduce a triplet loss as follows: 
\begin{equation}
\mathcal{L}_{SCC} = \max \left( D(\bm{z}_f, \bm{z}_f^{pos})- D(\bm{z}_f, \bm{z}_f^{neg})+ \bm{m}, 0 \right),
\end{equation}
where $\bm{z}_f$, $\bm{z}_f^{pos}$ and $\bm{z}_f^{neg}$ are the static factors of the anchor sequence, the shuffled sequence as the positive data, and another randomly sampled video as the negative data, $D(\cdot,\cdot)$ denotes the Euclidean distance and $\bm{m}$ is the margin, set to 1. This regularization makes $\bm{z}_f$ preserve meaningful static information to a certain degree while excluding dynamic information. 

\begin{figure}
	\centering 
	\includegraphics[width=0.8\columnwidth]{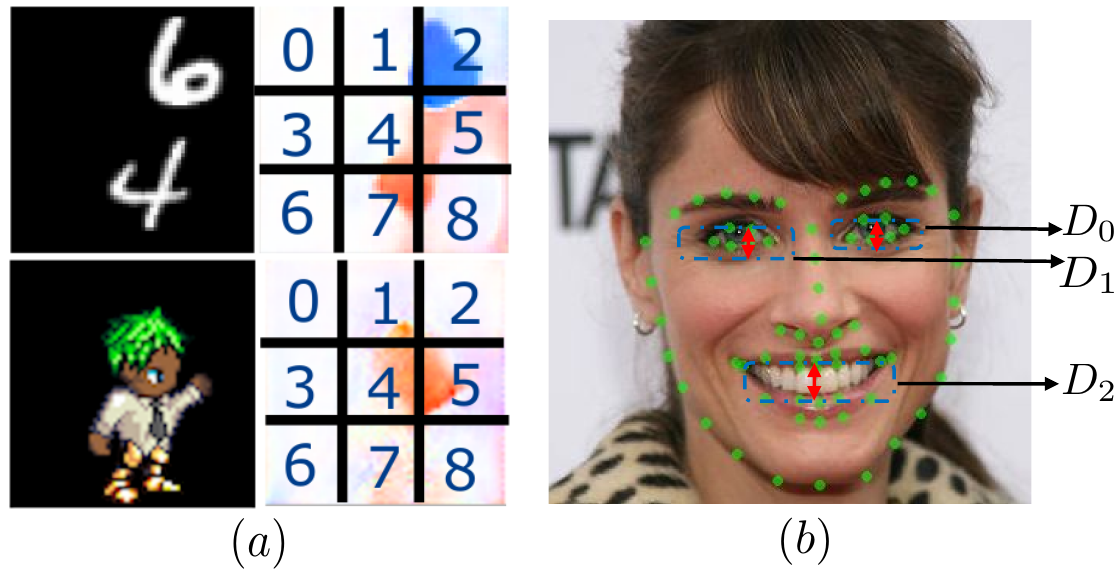}
	\vspace{-0.5em}
	\caption{The pseudo label generation for video datasets. \textbf{(a)} The left image is the input frame and the right image is the corresponding optical flow map that is split by a grid. \textbf{(b)} Three distances are used as the dynamic signals for the face dataset.}
	\label{video_of}
	\vspace{-1em}
\end{figure}

\subsection{Dynamic Factor Prediction}
To encourage the dynamic representation $\bm{z}_t$ to carry adequate and correct time-dependent information of each time-step, we exploit dynamic information-related signals from  off-the-shelf tools for different types of sequential data and accordingly design the auxiliary tasks as the regularization  imposed on $\bm{z}_t$.  We have the loss $\mathcal{L}_{DFP} = \mathcal{L}(\phi_a(\bm{z}_t), y)$, where $\mathcal{L}(\cdot, \cdot)$ can be either cross-entropy loss or mean squared error loss according to the designed auxiliary task, $\phi_a(\cdot)$ is a network for dynamic factor prediction and $y$ contains supervisory signals. 

\noindent\textbf{Video Data} \quad
The dynamic representation $\bm{z}_t$ can be learned by forcing it to predict the dynamic factors of videos. Motivated by this, we expect the location of the largest motion regions can be  accurately predicted based on $\bm{z}_t$. To this end,  the optical flow maps of video are first obtained by commonly used functional model FlowNet2~\cite{ilg2017flownet} and then split into patches by grid, as shown in Figure~\ref{video_of}.a. We compute the average of motion magnitudes for every patch and use the indices of patches with the top-k largest values as the pseudo label for prediction. For this task, $\phi_a(\cdot)$ is implemented with two fully-connected layers and a softmax layer.

Apart from the optical flow, some freely obtainable data-specific signals can be exploited. For a human face dataset, the landmark of each frame can be readily detected and considered as a supervision for dynamic factors. We obtain the landmark from an off-the-shelf landmark detector~\cite{dong2018style}.
To keep our model concise and efficient, we only leverage the distance between the upper and lower eyelids as well as the distance between the upper and lower lips in each frame as the dynamic signal, as shown in Figure~\ref{video_of}.b. Here, $\phi_a(\cdot)$ consists of two fully-connected layers to regress the three distances. We observe that our model can easily capture dynamic motions under this simple supervision. 

\noindent\textbf{Audio Data} \quad
For the audio dataset, we consider the volume as  time-dependency factor and accordingly design an auxiliary task, where $\bm{z}_t$ is forced to predict if the speech is silent or not in each segment. The pseudo label is readily obtained by setting a magnitude threshold on the volume of each speech segment. $\phi_a(\cdot)$ consists of two fully-connected layers and performs a binary classification. 

\subsection{Mutual Information Regularization}
Forcing the time-varying variable $\bm{z}_t$ to predict dynamic factors can guarantee that $\bm{z}_t$  contains adequate dynamic information, but this fails to guarantee that $\bm{z}_t$  excludes the static information.  Therefore, we introduce the mutual information between static and dynamic variables as a regulator $\mathcal{L}_{MI}$.
The mutual information is a measure of the mutual dependence between two variables. By minimizing $\mathcal{L}_{MI}$, we encourage the information in these two variables are mutually exclusive. The mutual information is formally defined as the KL divergence of the joint distribution to the product of marginal distribution of each variable. We have
%\vspace{-0.4em}
\begingroup\makeatletter\def\f@size{9}\check@mathfonts
\def\maketag@@@#1{\hbox{\m@th\large\normalfont#1}}
\begin{equation}
\begin{split}
\mathcal{L}_{{MI}}(\bm{z}_f, \bm{z}_{1:T}) %&= \sum_{t=1}^T \text{MI}(\bm{z}_f, \bm{z}_t) \\
&= \sum_{t=1}^T \text{KL}(q(\bm{z}_f, \bm{z}_t)|| q(\bm{z}_f)q(\bm{z}_t))\\
&= \sum_{t=1}^T [ H(\bm{z}_f) + H(\bm{z}_t) - H(\bm{z}_f, \bm{z}_t)], 
\end{split}
\end{equation}
\endgroup
where $H(\cdot) = -\E_{q(\bm{z})}[\log(q(\cdot))] = - \E_{q(\bm{z}_f, \bm{z}_t)}[\log(q(\cdot))]$. The expectation can be estimated by the mini-batch weighted sampling estimator~\cite{chen2018isolating}, 
\begingroup\makeatletter\def\f@size{9}\check@mathfonts
\def\maketag@@@#1{\hbox{\m@th\large\normalfont#1}}
\begin{equation}
\E_{q(\bm{z})}[\log q(\bm{z}_n)] \approx \frac{1}{M}\sum_{i=1}^M \left[ \log \sum_{j=1}^{M} q(\bm{z}_n(\bm{x}_i)|\bm{x}_j) - \log(NM) \right], 
\end{equation}\endgroup
for $\bm{z}_n = \bm{z}_f, \bm{z}_t$ or $(\bm{z}_f, \bm{z}_t)$, where $N$ and $M$ are the data size and the minibatch size, respectively. 

\subsection{Objective Function}
Overall, our objective can be considered as the sequential VAE loss with a series of self-supervision and regularization: 
\begin{equation}
\mathcal{L} = \mathcal{L}_{VAE} + \lambda_1\mathcal{L}_{SCC} + \lambda_2\mathcal{L}_{DFP} + \lambda_3\mathcal{L}_{MI},
\end{equation}
where $\lambda_1$, $\lambda_2$ and $\lambda_3$ are balancing factors.

\section{Experiments}
To comprehensively validate the effectiveness of our proposed model with self-supervision, we conduct experiments on three video datasets and one audio dataset. With these four datasets, we cover different modalities from video to audio. In video domain, a large range of motions are covered from large character motions (e.g., walking, stretching) to subtle facial expressions (e.g, smiling, disgust).  

\subsection{Experiments on Video Data}
We present an in-depth evaluation on two problems, tested on three different datasets and employing a large variety of metrics. 
  
\subsubsection{Datasets}
\textbf{Stochastic Moving MNIST} is introduced by~\cite{denton2018stochastic} and consists of sequences of $15$ frames of size $64 \times 64$, where two digits from  MNIST dataset move in random directions. We randomly generate 6000 sequences, 5000 of which are used for training and the rest are for testing.

\noindent\textbf{Sprite}~\cite{li2018disentangled} contains sequences of animated cartoon characters with 9 action categories: walking, casting spells and slashing with three viewing angles.  The appearance of characters are fully controlled by four attributes, i.e., the color of skin, tops, pants, and hair. Each of the attributes categories contains 6 possible variants, therefore it results in  totally $6^4 = 1296$ unique characters, 1000 of which are used for training and the rest for testing. Each sequence contains 8 frames of size $64 \times 64$.

\noindent\textbf{MUG Facial Expression}~\cite{5617662} consists of $3528$ videos with $52$ actors performing $6$ different facial expressions: anger, fear, disgust, happiness, sadness, and surprise. Each video composes of 50 to 160 frames. 
As suggested in MoCoGAN~\cite{tulyakov2018mocogan}, we crop the face regions, resize video to $64 \times 64$, and randomly sample a clip of $15$ frames in each video.  The $75\%$ of dataset is used for training and the rest for testing.    
\begin{figure}
	\centering 
	\includegraphics[width=1\columnwidth]{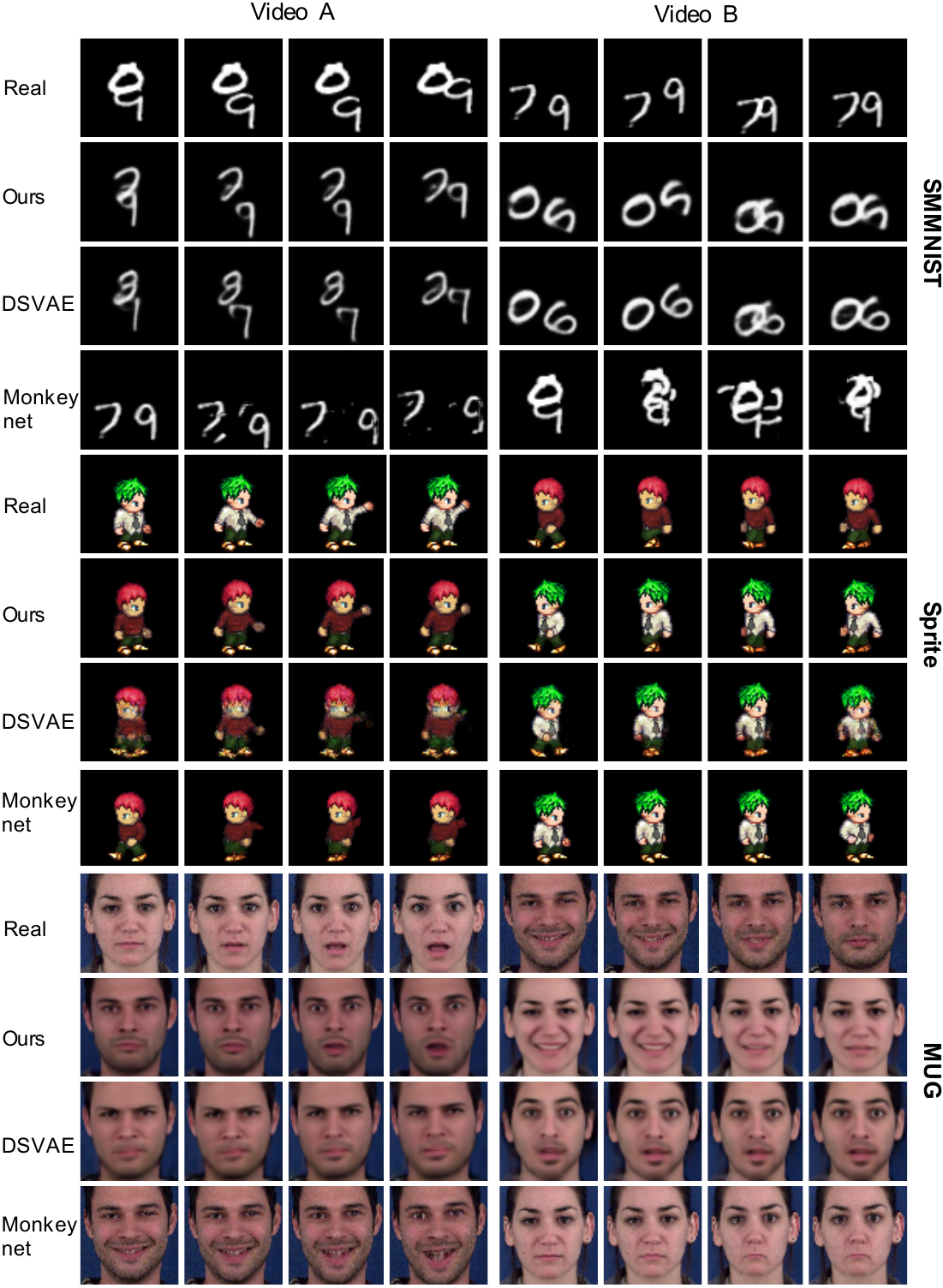}
	
	\caption{Representation swapping on SMMNIST, Sprite and MUG datasets. In each panel, we show two real videos as well as the generated videos by swapping $\bm{z}_f$ and  $\bm{z}_{1:T}$, from our model and two competing models: DSVAE and MonkeyNet. Each column is supposed to have the same motion.}
	\vspace{-1.5em}
	\label{swap}
\end{figure}

\subsubsection{Representation Swapping}
We first perform the representation swapping and compare our method with DSVAE, a disentangled VAE model, as well as MonkeyNet~\cite{Siarohin_2019_CVPR}, a state-of-the-art deformable video generation model.   Suppose two real videos are given for motion information and appearance information, denoted as $\bm{V}_m$ and $\bm{V}_a$. Our method and DSVAE perform video generation based on the $\bm{z}_f$ from $\bm{V}_a$ and $\bm{z}_{1:T}$ from $\bm{V}_m$ . For MonkeyNet, the videos are generated by deforming the first frame of $\bm{V}_a$ based on motion in $\bm{V}_m$.  The synthetic videos are expected to preserve the appearance in $\bm{V}_a$ and the motion in $\bm{V}_m$. The qualitative comparisons on three datasets are shown in Figure~\ref{swap}.

For SMMNIST, the generated videos of our model can preserve the identity of digits while consistently mimic the motion of the provided video. However, DSVAE can hardly preserve the identity. For instance, it mistakenly changes the digit ``9" to ``6". We observe that MonkeyNet can hardly handle the case with multiple objects like SMMNIST, because the case does not meet its implicit assumption of only one object moving in the video.

For Sprite, DSVAE generates blurry videos when the characters in $\bm{V}_a$ and $\bm{V}_m$ have opposite directions, indicating it fails to encode the direction information in the dynamic variable.  Conversely, our model can generate videos with the appearance of the character in $\bm{V}_a$ and  the same action and direction of the character in $\bm{V}_m$, due to the guidance from optical flow. 
 The characters in the generated videos of Monkeynet fail to follow the pose and action in $\bm{V}_m$, and many artifacts appear. E.g., an arm-like blob appears in the back of the character in the left panel. 

For MUG, the generated video of DSVAE can hardly preserve both the appearance in $\bm{V}_a$ and the facial expression in $\bm{V}_m$. For example, the person in the right has a mixed appearance characteristic, indicating $\bm{z}_f$ and $\bm{z}_t$ are entangled. Due to the deformation scheme of generation, MonkeyNet fails to handle the case where the faces in two videos are not well aligned. For instance, forcing the man with a smile to be fear results in unnatural expression. On the contrary, our model disentangles $\bm{z}_f$ and $\bm{z}_t$, supported by the realistic expressions on different faces in generated videos.

\begin{figure}
	\centering 
	\includegraphics[width=0.48\textwidth]{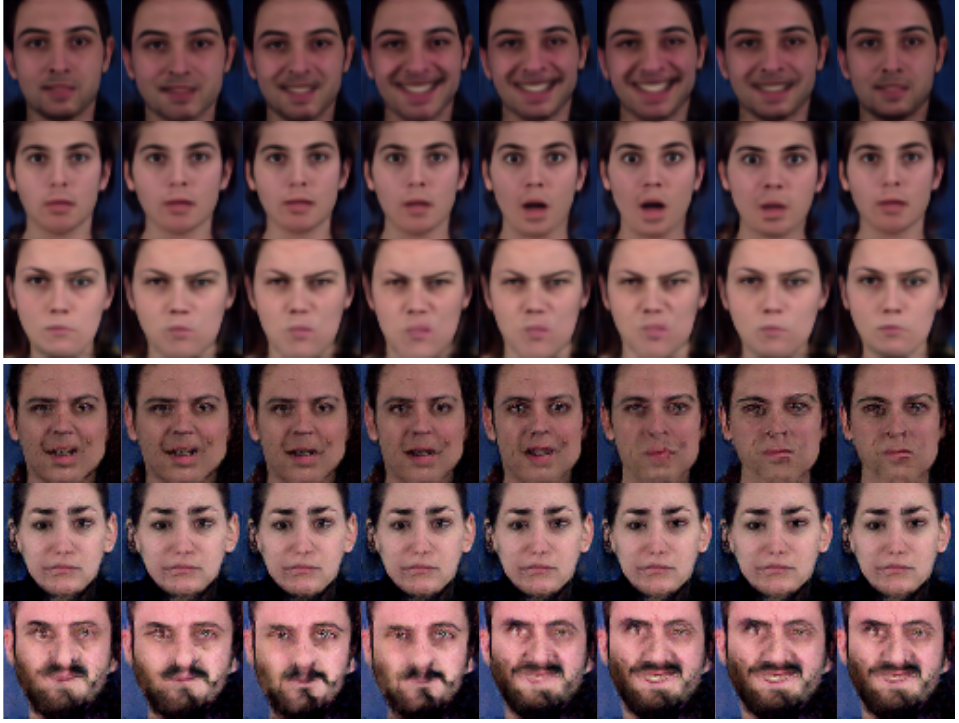}
	\vspace{-1.5em}
	\caption{Unconditional video generation on MUG. The upper and lower panels show the qualitative results of our model and MoCoGAN, respectively. }
	\vspace{-0.5em}
	\label{unconditional}
\end{figure}

\begin{figure}
	\centering 
	\includegraphics[width=0.5\textwidth]{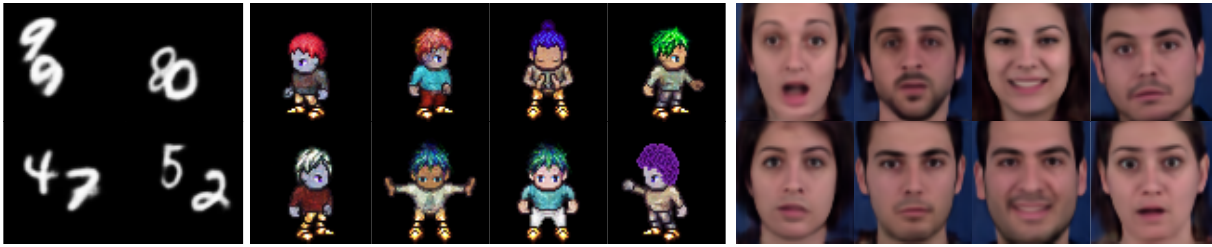}
	\caption{Randomly sampled frames for each dataset.}
	\label{sample}
	\vspace{-1em}
\end{figure}

 \begin{table*}[t]
 	\caption{Quantitatively performance comparison on SMMNIST, Sprite and MUG datasets. High values are expected for $Acc$, $H(y)$ and $IS$, while for $H(y|x)$, the lower values are better.	The results of our model with supervision of ground truth labels \textit{baseline-sv*} are shown as a reference.}
 	\vspace{-1em}
 	\label{disagree}
 	\centering
 	\scalebox{0.86}{
 		\begin{tabular}{|c|cccc|cccc|cccc|}\hline 
 			\multirow{2}{*}{Methods}
 			&\multicolumn{4}{c|}{SMMNIST} &\multicolumn{4}{c|}{Sprite} &\multicolumn{4}{c|}{MUG}\\
 			&$Acc$   &$IS$  & $H(y|x)$　&$H(y)$  &$Acc$   &$IS$  & $H(y|x)$　&$H(y)$ &$Acc$ &$IS$  & $H(y|x)$　&$H(y)$ \\\hline

 			MoCoGAN  &74.55\% &4.078&0.194 &0.191 &92.89\%  &8.461  &0.090 &2.192  &63.12\% &4.332 &0.183  &1.721 \\
 			DSVAE  &88.19\% &6.210&0.185&2.011 &90.73\%  &8.384 & 0.072 &2.192
 			&54.29\%  &3.608 &0.374 &1.657 \\\hline 
 			
		 	\textit{baseline}  &90.12\%&6.543&0.167 &2.052  &91.42\%  &8.312 &0.071 &2.190 &53.83\%  &3.736 &0.347 &1.717\\
 		
 			\textit{full model} &\textbf{95.09\%}&\textbf{7.072}&\textbf{0.150}&\textbf{2.106}
 			&\textbf{99.49\%}  &\textbf{8.637} &\textbf{0.041}  &\textbf{2.197} 
 			&\textbf{70.51\%}  &\textbf{5.136} &\textbf{0.135} & \textbf{1.760}

 			\\\hline\hline 
 			\textit{baseline-sv*} &92.18\%&6.845& 0.156&2.057  &98.91\%& 8.741&0.028&  2.196&72.32\%& 5.006 &0.129 &1.740\\\hline 
 		\end{tabular}
 	}
 \end{table*}
 
 \begin{figure*}
 	\begin{minipage}[c]{0.73\textwidth}
 		\includegraphics[width=\textwidth]{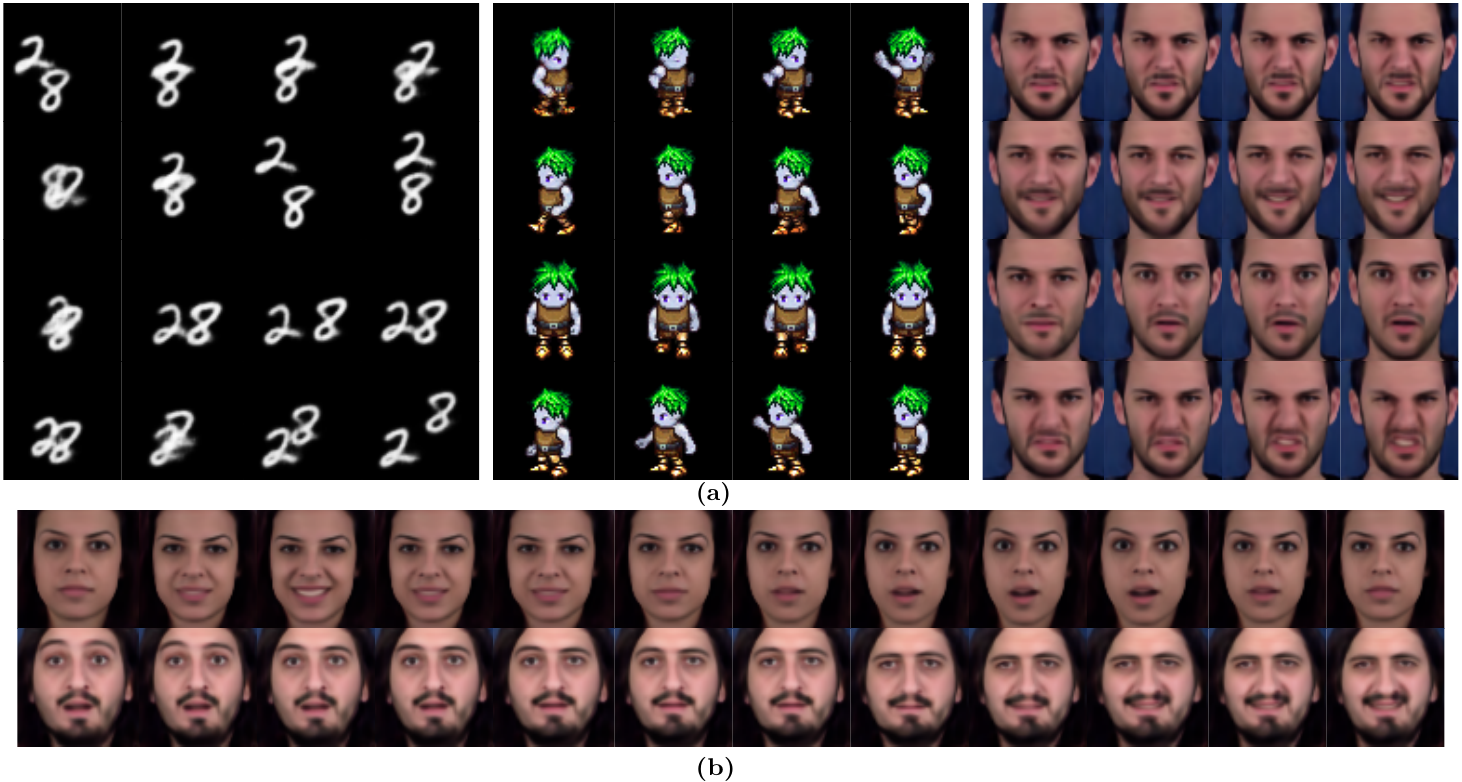}
 	\end{minipage}\hfill
 	\begin{minipage}[c]{0.25\textwidth}
 		\caption{
 			Controlled video generation. \textbf{(a)} Video generation controlled by fixing the static variable and randomly sampling dynamic variables from the prior $p(\bm{z}_{1:T})$. All sequences share a same identity but with different motions for each dataset. \textbf{(b)} Video generation with changed facial expressions. Expression is changed from smile to surprise and from surprise to disgust in two sequences, respectively. We control it by transferring the dynamic variables. 
 		} \label{control}
 	\end{minipage}
 \vspace{-1.5em}
 \end{figure*}

\subsubsection{Video Generation}
\noindent\textbf{Quantitative Results}
We compute the quantitative metrics of our model with and without self-supervsion and regularization, denoted as \textit{full model} and \textit{baseline}, as well as two competing methods: DSVAE and MoCoGAN. All these methods are comparable as no ground truth labels are used to benefit representation disentanglement. Besides, the results of our baseline with full supervision from human-annotation \textit{baseline-sv} are also provided as a reference. 

To demonstrate the ability of a model on the representation disentanglement, we use the classification accuracy $Acc$~\cite{li2018disentangled}, which measures the ability of  a model to preserve a specific attributes when generating a video given the corresponding representation or label.
To measure how diverse and realistic videos a model can generate, three metrics are used: $IS$~\cite{salimans2016improved}, Intra-Entropy $H(y|x)$~\cite{he2018probabilistic} and Inter-Entropy $H(y)$~\cite{he2018probabilistic}.
All metrics utilize a pretrained classifier based on the real videos and ground truth attributes.  See Appendix for the detailed definitions of the metrics. The results are shown in Table~\ref{disagree}.
 
For representation disentanglement, we consider generating videos with a given $\bm{z}_f$ inferred from a real video and randomly sampled $\bm{z}_{1:T}$ from the prior $p(\bm{z}_{1:T})$ for SMMNIST. We then check if the synthetic video contains the same digits as the real video by the pretrained classifier. For MUG, we evaluate the ability of a model to preserve the facial expression by fixing $\bm{z}_{1:T}$ and randomly sampled $\bm{z}_f$ from the prior $p(\bm{z}_f)$. For Sprite, since the ground truth of both actions and appearance attributes are available, we evaluate the ability of preserving both static and dynamic representations, and report the average scores.
It's evident that our \textit{full model} consistently outperforms all competing methods. For SMNIST, we observe that MoCoGAN have poor ability to correctly generate the digits with given labels  while our \textit{full model} can generate correctly digits, reflected by the high $Acc$. Note that our \textit{full model}  achieves $99.49\%$ $Acc$ on Sprite, indicating the $z_f$ and $z_t$ are greatly disentangled. 
Besides, \textit{full model} significantly boosts the performance of \textit{baseline}, especially in MUG where more realistic data is contained, the performance gets giant boost from $53.83\%$ to $70.51\%$, which illustrates the crucial role of our self-supervision and regularization. For video generation, \textit{full model} consistently shows the superior performances on $IS$, $H(y|x)$, $H(y)$.  Especially in MUG, \textit{full model} outperforms the runner-up MoCoGAN by $18.6\%$ on $IS$, demonstrating that high quality of videos generated by our model. Note that our \textit{baseline} is also compared favorably to DSVAE, illustrating the superiority of the designed sequential VAE model. 

It is worth noting that our model with self-supervision \textit{full model} outperforms \textit{baseline-sv} in SMMNIST on the representation disentanglement.　The possible reason is with the ground truth labels, \textit{baseline-sv} only encourages $\bm{z}_f$ to contain identity information but fails to exclude the information in $\bm{z}_t$, resulting in confusion in digit recognition when using various dynamic variables. For MUG, \textit{baseline-sv} performs better on preserving motion. We conjecture it's because that the strong supervision of expression labels forces $\bm{z}_t$ to encode the dynamic information, and $\bm{z}_f$ does not favor encoding temporal dynamic information and thus varying $\bm{z}_f$ does not affect the motion much.

\noindent\textbf{Qualitative results} We first demonstrate the ability of our model to manipulate video generation in Figure~\ref{control}. By fixing $\bm{z}_f$ and sampling $\bm{z}_{t}$, our model can generate videos with the same object that performs various motions as shown in Figure~\ref{control}.a.　 Even in one video, the facial expression can be transferred by controlling $\bm{z}_{t}$, as shown in Figure~\ref{control}.b. 

We also evaluate the appearance diversity of generate objects from our model. Figure~\ref{sample} shows the frames our model generates with sampled $\bm{z}_f$. The objects with realistic and diverse appearances validate our model's outstanding capability of high-quality video generation.

Besides, we compare our model with MoCoGAN on unconditional video generation on MUG dataset, as shown in Figure~\ref{unconditional}. The videos are generated with sampled $\bm{z}_f$ and $\bm{z}_t$. MoCoGAN generates videos with many artifacts, such as unrealistic eyes and inconsistent mouth in the third video. Conversely, our model generates more realistic human faces with consistent high-coherence expressions.

\subsubsection{Ablation Studies}
In this section, we present an ablation study to empirically measure the impact of each regularization of our model on its performance. The variant without a certain regularizaiton is denoted as $No$ $\mathcal{L}_{X}$.  
In Table~\ref{tb:ablation}, we report the quantitative evaluation. We note that $No$ $ \mathcal{L}_{SSC}$ performs worse than the full model. This illustrates the significance of the static consistency constraint to make $\bm{z}_f$ to be disentangled from $\bm{z}_t$ .
$No$ $\mathcal{L}_{DFP}$ degrades the performance on $Acc$ considerably, indicating that $\mathcal{L}_{DFP}$ as regularization is crucial to preserve the action information in the dynamic vector.  Besides, after removing the mutual information regularization, $No$ $\mathcal{L}_{MI}$ again shows an inferior performance to the full model. A possible explanation is that our $\mathcal{L}_{MI}$ encourages that $\bm{z}_t$ excludes the appearance information; thus the appearance information is only from $\bm{z}_f$. 
The qualitative results shown in Figure~\ref{ablation} confirms this analysis. We generate a video with $z_f$ of the woman and $z_t$ of the man in the first row by different variants of our model.  Without $\mathcal{L}_{MI}$, some characteristics of the man are still preserved, such as the beard, confirming that the appearance information partially remains in $z_t$.  In the results of $No$ $\mathcal{L}_{SSC}$, the beard is more evident, indicating that the static and dynamic variable are still entangled.   On the other hand, without  $\mathcal{L}_{DFP}$, the woman in the generated video cannot mimic the action of the man well, which indicates the dynamic variable does not encode the motion information properly. Finally, the person in the generated video of $baseline$ neither preserves the appearance of the woman nor follows the expression of the man. It illustrates the representation disentanglement without any supervision remains a hard task.

\begin{table}[t]
	\caption{Ablation study of disentanglement on MUG.}
	\label{tb:ablation}
	\centering 
	\vspace{-1em}
	\scalebox{0.9}{
		\begin{tabular}{|l|cccc|}\hline 
			Methods &$Acc$   &$IS$  & $H(y|v)$ &$H(y)$ 
			\\ \hline
			$No$ $\mathcal{L}_{SCC}$   &61.45\% &4.850 &0.201 &1.734\\
			$No$ $\mathcal{L}_{DFP}$   &58.32\%  &4.423 &0.284 &1.721\\
			$No$ $\mathcal{L}_{MI}$  &66.07\% & 4.874 &0.175  &1.749\\
			\textit{Full model}      &\textbf{70.51\%}  &\textbf{5.136} &\textbf{0.135} & \textbf{1.760}  \\\hline 
		\end{tabular}
	}
		\vspace{-1.5em}
\end{table}
\begin{figure}
	\centering 
	\includegraphics[width=0.38\textwidth]{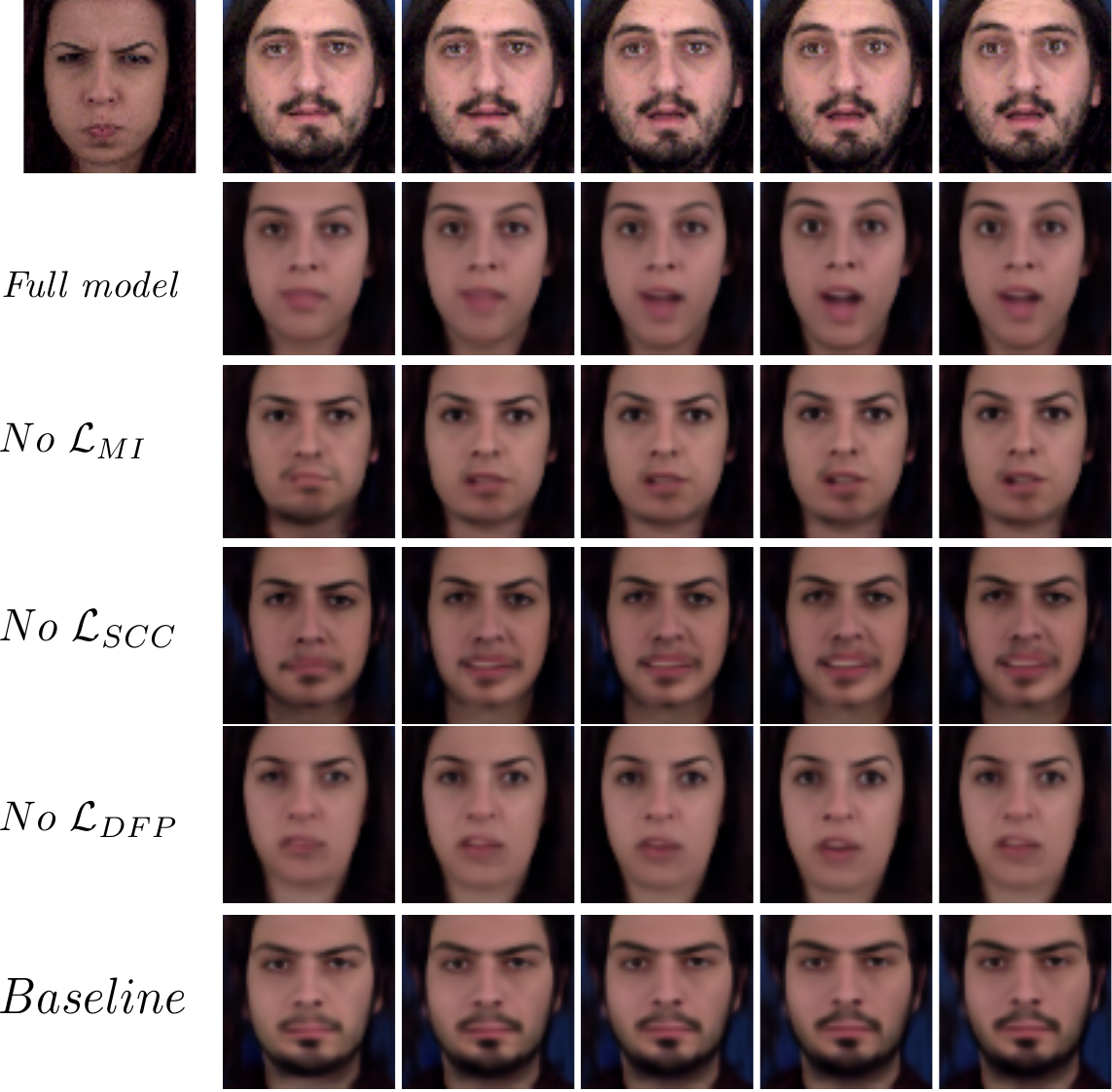}
	\caption{Ablation study on MUG dataset. The first frame of the appearance video and the motion video are shown in the first row.}
	\label{ablation}
	\vspace{-1em}
\end{figure}

\subsection{Experiments on Audio Data}
To demonstrate the general applicability of our model on sequential data, we conduct experiments on audio data, where the time-invariant and time-varying factors are the timbre of a speaker and the linguistic content of a speech, respectively. The dataset we use is TIMIT, which is a corpus of phonemically and lexically transcribed speech of American English speakers of different sexes and dialects~\cite{garofolo1993darpa}, and contains  63000 recordings of read speeches. We split the dataset to training and testing subsets with a ratio of 5:1. As in ~\cite{hsu2017unsupervised}, all the speech are presented as a sequence of 80 dimensional Mel-scale filter bank features.

We quantitatively compare our model with FHVAE and DSVAE on the speaker verification task based on either $\bm{z}_f$ or  $\bm{z}_{1:T}$ , measured by the Equal Error Rate(EER)~\cite{chenafa2008biometric}. 
Note that we expect the speaker can be correctly verified with $\bm{z}_f$ as it encodes the timbre of speakers, and randomly guess with $\bm{z}_{1:T}$ as it ideally only encodes the linguistic content. The results are shown in Table~\ref{tab:timit_verify}. Our model outperforms competing methods in both cases. Especially when based on $\bm{z}_{1:T}$, our model doubles the score of the baseline, indicating our model significantly eliminate the timbre information in $\bm{z}_{1:T}$.

\begin{table}
	\centering
	\caption{Performance comparison on speaker verification. Small errors are better for  $\bm{z}_f$ and large errors are expected for $\bm{z}_{1:T}$.  
	}
	\vspace{-1em}
	\scalebox{0.9}{
	\begin{tabular}{|l|ccc|}
		\hline
		model & feature & dim & EER  \\
		\hline
		FHVAE & $\bm{z}_f$ & 16 & 5.06\% \\
		\hline
		DSVAE & $\bm{z}_f$ & 64 & 4.82\% \\
		& $\bm{z}_{1:T}$ & 64 & 18.89\% \\ \hline 
	    Ours & $\bm{z}_f$ & 64 & \textbf{4.80\%} \\
		& $\bm{z}_{1:T}$ & 64 & \textbf{40.12\%} \\
		\hline
	\end{tabular}
	}
	\label{tab:timit_verify}
	\vspace{-1.5em}
\end{table}

\section{Conclusion}
We propose a self-supervised sequential VAE, which learns disentangled time-invariant and time-varying representations for sequential data. We show that, with readily accessible supervisory signals from data itself and off-the-shelf tools, our model can achieve comparable performance to the fully supervised models that require costly human annotations. The disentangling ability of our model is qualitatively and quantitatively verified on four datasets across video and audio domains. The appealing results on a variety of tasks illustrate that, leveraging self-supervision is a promising direction for representation disentanglement and sequential data generation.  In the future, we plan to extend our model to high-resolution video generation, video prediction and image-to-video generation.

{\small
\bibliographystyle{ieee_fullname}
\bibliography{egbib}
}

\begin{appendices}
	\section{ Minibatch Weighted Sampling}
	Minibatch Weighted Sampling is an estimator of the posterior $q(z)$ introduced by~\cite{chen2018isolating}.  Let $N$ be the size of a dataset and $M$ be the size of a minibatch, the entropy of the posterior distribution can be estimated based on a minibatch:
	\begingroup\makeatletter\def\f@size{9}\check@mathfonts
	\def\maketag@@@#1{\hbox{\m@th\large\normalfont#1}}
	\begin{equation}
	\begin{split}
	\E_{q(z)}[\log q(z)] \approx \frac{1}{M}\sum_{i=1}^M \left[ \log \sum_{j=1}^{M} q(z(x_i)|x_j) - \log(NM) \right]\\
	\end{split}
	\end{equation}\endgroup
	The readers can refer to ~\cite{chen2018isolating} for the details. 
	
	In our model, the posterior of the latent variable $z$ can be factorized as $q(z|x) = q(z_f|x)q(z_t|x)$. Thus the entropy of the joint distribution can be estimated:　
	\begingroup\makeatletter\def\f@size{9}\check@mathfonts
	\def\maketag@@@#1{\hbox{\m@th\large\normalfont#1}}
	\begin{equation}
	\begin{split}
	&\E_{q(z)}[\log q(z)] \approx \\
	&\frac{1}{M}\sum_{i=1}^M \left[ \log \sum_{j=1}^{M} q(z_f(x_i)|x_j)q(z_t(x_i)|x_j) - \log(NM) \right]
	\end{split}
	\end{equation}\endgroup

	\begin{lemma}
		\label{pythagorean}
		Given a dataset of N samples $\D_N = \{x_1, ..., x_N\}$ with a distribution $p(x)$ and  a minibatch of $M$ samples  $\B_M = \{x_1, ..., x_M\}$ drawn i.i.d. from $p(x)$, and assume the posterior of the latent variable $z$  can be factorized as: $q(z|x) = q(z_1|x)q(z_2|x)$, the lower bound of $\E_{q(z)}[\log q(z_n)]$, $n=1$ or $2$, is : 
		\begingroup\makeatletter\def\f@size{9}\check@mathfonts
		\def\maketag@@@#1{\hbox{\m@th\large\normalfont#1}}
		\begin{equation*}
		\E_{q(z,x)} \left[ \log \E_{r(\B_M|x)} \left[ \frac{1}{NM} \sum_{m=1}^{M} q(z_n|x_m) \right] \right],  
		\end{equation*}\endgroup
		where $r(\B_M|x)$ denotes the probability of a sampled minibatch where one of the elements is fixed to be $x$ and the rest are sampled i.i.d. from $p(x)$.
	\end{lemma}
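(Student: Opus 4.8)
The plan is to establish the claimed inequality
\[
\E_{q(z,x)}\!\left[\log \E_{r(\B_M|x)}\!\left[\tfrac{1}{NM}\textstyle\sum_{m=1}^M q(z_n|x_m)\right]\right]\ \le\ \E_{q(z)}[\log q(z_n)]
\]
by reducing it to an elementary \emph{pointwise} upper bound on the quantity inside the logarithm, so that only monotonicity of $\log$ is needed and no Jensen step enters the statement as written.

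First I would record two bookkeeping facts. Since $q(z,x)=p(x)\,q(z|x)$ and marginalizing out $x$ recovers the aggregated posterior $q(z)$, we have $\E_{q(z)}[\log q(z_n)]=\E_{q(z,x)}[\log q(z_n)]$, so it is enough to bound the integrand on the left by $\log q(z_n)$ for every $(z,x)$ with $x\in\D_N$. Moreover, the factorization $q(z|x)=q(z_1|x)q(z_2|x)$ makes $q(z_n|x)$ the $n$-th marginal factor appearing in the estimator, and since $p$ is the empirical distribution of $\D_N$ the aggregated marginal satisfies $q(z_n)=\E_{x'\sim p}[q(z_n|x')]=\tfrac1N\sum_{i=1}^N q(z_n|x_i)$.

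Next I would evaluate the inner expectation in closed form. By the definition of $r(\B_M|x)$, one of the $M$ elements of $\B_M$ equals $x$ and the remaining $M-1$ are i.i.d.\ draws from $p$, so linearity of expectation gives
\[
\E_{r(\B_M|x)}\!\left[\tfrac{1}{NM}\textstyle\sum_{m=1}^M q(z_n|x_m)\right]=\frac{q(z_n|x)+(M-1)\,q(z_n)}{NM}.
\]
The one non-formal ingredient is the estimate $q(z_n|x)\le N\,q(z_n)$, which holds because $x$ is one of $x_1,\dots,x_N$ and every summand of $\sum_{i=1}^N q(z_n|x_i)=N\,q(z_n)$ is nonnegative. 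Substituting it and using $NM-(N+M-1)=(N-1)(M-1)\ge 0$ yields the pointwise bound
\[
\E_{r(\B_M|x)}\!\left[\tfrac{1}{NM}\textstyle\sum_{m=1}^M q(z_n|x_m)\right]\ \le\ \frac{N+M-1}{NM}\,q(z_n)\ \le\ q(z_n).
\]

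Finally I would take the (increasing) logarithm of this pointwise inequality and integrate both sides against $q(z,x)$, so that the right-hand side becomes $\E_{q(z,x)}[\log q(z_n)]=\E_{q(z)}[\log q(z_n)]$ and the left-hand side is exactly the displayed estimator expectation; this gives the asserted lower bound. I do not expect a genuine obstacle: the only care needed is in unpacking the definition of $r(\B_M|x)$ together with the $1/(NM)$ normalization so that the inner expectation collapses cleanly, and in using that $x$ ranges over dataset points so that $q(z_n|x)\le N\,q(z_n)$ is legitimate. (Pushing $\log$ inside $\E_{r(\B_M|x)}$ via Jensen's inequality afterward converts this into the tractable minibatch-weighted estimator used elsewhere in the paper, but that extra step is not part of the lemma.)
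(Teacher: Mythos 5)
Your proof is correct, but it takes a genuinely different route from the paper's. The paper never evaluates the inner expectation in closed form: it writes $\E_{q(z)}[\log q(z_n)] = \E_{q(z,x)}\bigl[\log \E_{p(\B_M)}\bigl[\tfrac{1}{M}\sum_{m} q(z_n|x_m)\bigr]\bigr]$ and then performs a change of measure from $p(\B_M)$ to $r(\B_M|x)$ inside the logarithm, observing that the importance-weighted sum $\E_{r(\B_M|x)}\bigl[\tfrac{p(\B_M)}{r(\B_M|x)}(\cdot)\bigr]$ ranges only over the sub-support of batches containing $x$ and hence, the integrand being nonnegative, can only decrease the value; the factor $\nicefrac{1}{N}$ is then simply the ratio $(\nicefrac{1}{N})^{M}/(\nicefrac{1}{N})^{M-1}$. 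You instead compute $\E_{r(\B_M|x)}\bigl[\tfrac{1}{NM}\sum_{m} q(z_n|x_m)\bigr] = \tfrac{q(z_n|x)+(M-1)q(z_n)}{NM}$ exactly and dominate it pointwise by $q(z_n)$ via $q(z_n|x)\le N\,q(z_n)$ and $(N-1)(M-1)\ge 0$, then push the monotone logarithm through. Both arguments are valid, and both rely on $p$ being the uniform empirical distribution on $\D_N$ so that $x$ is a dataset point (the paper needs this for the support inclusion, you need it for $q(z_n|x)\le N\,q(z_n)$). Your version is more elementary --- no importance weighting, only monotonicity of $\log$ --- and more informative, since the explicit value $\tfrac{N+M-1}{NM}\,q(z_n)$ quantifies the slack in the bound. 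The paper's version is the standard argument of Chen et al.\ and generalizes more readily, since it uses nothing about the integrand beyond nonnegativity and applies to any restricted batch-sampling scheme whose support is contained in that of $p(\B_M)$.
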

	\begin{proof}
		For any sampled batch instance $\B_M$, $p(\B_M) = \left(\nicefrac{1}{N}\right)^M$, and when one of the elements is fixed to be $x$,  $r(\B_M|x) = \left(\nicefrac{1}{N}\right)^{M-1}$.
		\begin{equation*}
		\begin{split}
		&\E_{q(z)} \left[ \log q(z_n) \right] \\
		=&\E_{q(z,x)} \left[ \log \E_{x' \sim p(x)} \left[ q(z_n|x') \right] \right] \\
		=&\E_{q(z,x)} \left[ \log \E_{p(\B_M)} \left[ \frac{1}{M} \sum_{m=1}^{M} q(z_n|x_m) \right] \right] \\
		\geq&\E_{q(z,x)} \left[ \log \E_{r(\B_M|x)} \left[ \frac{p(\B_M)}{r(\B_M|x)}\frac{1}{M} \sum_{m=1}^{M} q(z_n|x_m) \right] \right] \\
		=&\E_{q(z,x)} \left[ \log \E_{r(\B_M|x)} \left[ \frac{1}{NM} \sum_{m=1}^{M} q(z_n|x_m) \right] \right] \\
		\end{split}
		\end{equation*}
		The inequality is due to $r$ having a support that is a subset of that of $p$. 
	\end{proof}
	Following Lemma~\ref{pythagorean}, when provided with a minibatch of samples $\{x_1, ..., n_M\}$, we can use estimate the lower bound as:
	\begingroup\makeatletter\def\f@size{9}\check@mathfonts
	\def\maketag@@@#1{\hbox{\m@th\large\normalfont#1}}
	\begin{equation}
	\begin{split}
	\E_{q(z)}[\log q(z_f)] \approx \frac{1}{M}\sum_{i=1}^M \left[ \log \sum_{j=1}^{M} q(z_f(x_i)|x_j) - \log(NM) \right] \\
	\E_{q(z)}[\log q(z_t)] \approx \frac{1}{M}\sum_{i=1}^M \left[ \log \sum_{j=1}^{M} q(z_t(x_i)|x_j) - \log(NM) \right]
	\end{split}
	\end{equation}\endgroup
	where $z_f(x_i)$ is a sample from $q(z_f|x_i)$, and $z_t(x_i)$ is a sample from $q(z_t|x_i)$.
	
	\section{Derivation of Objective Function}
	We show the derivation of objective function in Eq.5. The observe model is defined as: 
	\begin{equation}
	\begin{split}
	p(\bm{x}_{1:T}) &= \int p(\bm{x}_{1:T}, \bm{z}) d\bm{z} \\
	&= \int\int p(\bm{x}_{1:T}, \bm{z}_f, \bm{z}_{1:T}) d \bm{z}_f d \bm{z}_{1:T}
	\end{split}
	\end{equation}
	To avoid the intractable integration over $\bm{z}_f$ and $\bm{z}_{1:T}$, variational inference introduces an  posterior approximation $q(\bm{z}_f, \bm{z}_{1:T}| \bm{x}_{1:T})$.  A variational lower bound of log $p(\bm{x}_{1:T})$ is: 
	\clearpage
	\begin{equation}
	\begin{split}
	\mathcal{L} = &\E_{q(\bm{z}_f, \bm{z}_{1:T}| \bm{x}_{1:T})}\left[ \log \frac{p(\bm{z}_f, \bm{z}_{1:T}, \bm{x}_{1:T})}{q(\bm{z}_f, \bm{z}_{1:T}| \bm{x}_{1:T})}\right]\\
	=&\E_{q(\bm{z}_f, \bm{z}_{1:T}| \bm{x}_{1:T})}\left[\log\frac{p(\bm{z}_f) \prod^T_{t=1} p(\bm{x}_t| \bm{z}_f, \bm{z}_t) p(\bm{z}_t|\bm{z}_{<t})}{q(\bm{z}_f|\bm{x}_{1:T})\prod^T_{t=1} q(\bm{z}_t|\bm{x}_{\leq t})} \right]\\
	=&\E_{q(\bm{z}_f, \bm{z}_{1:T}| \bm{x}_{1:T})}\left[ \sum ^T_{t=1} \log p(\bm{x}_t| \bm{z}_f, \bm{z}_t)\right]\\
	& - \E_{q(\bm{z}_f, \bm{z}_{1:T}| \bm{x}_{1:T})}\left[\log\frac{q(\bm{z}_f|\bm{x}_{1:T})}{p(\bm{z}_f)} \right]\\
	& - \E_{q(\bm{z}_f, \bm{z}_{1:T}| \bm{x}_{1:T})}\left[ \sum ^T_{t=1} \log \frac{q(\bm{z}_t|\bm{x}_{\leq t})}{p(\bm{z}_t|\bm{z}_{<t})} \right]\\
	= &\E_{q(\bm{z}_f, \bm{z}_{1:T}| \bm{x}_{1:T})}\left[ \sum ^T_{t=1} \log p(\bm{x}_t| \bm{z}_f, \bm{z}_t)\right]\\
	& - \E_{q(\bm{z}_f| \bm{x}_{1:T})}\left[\log\frac{q(\bm{z}_f|\bm{x}_{1:T})}{p(\bm{z}_f)} \right]\\
	& - \sum ^T_{t=1} \E_{q(\bm{z}_t| \bm{x}_{\leq T})}\left[  \log \frac{q(\bm{z}_t|\bm{x}_{\leq t})}{p(\bm{z}_t|\bm{z}_{<t})} \right]\\
	=&\E_{q(\bm{z}_f, \bm{z}_{1:T}| \bm{x}_{1:T})}\left[ \sum ^T_{t=1} \log p(\bm{x}_t| \bm{z}_f, \bm{z}_t)\right] \\&- \KL(q(\bm{z}_f|\bm{x}_{1:T})|| p(\bm{z}_f))\\& - \sum ^T_{t=1} \KL( q(\bm{z}_t|\bm{x}_{\leq t}) || p(\bm{z}_t|\bm{z}_{<t}))
	\end{split}
	\end{equation}
	we get line 2 from line 1 by plugging the Eq.2 and Eq.4.

	\section{Metrics Definition}
	Similar to~\cite{sun2018two}, we introduce the definitions of classification accuracy, Inception Score, Inter-Entropy, Intra-Entropy when groundtruth labels can not directly be provided to the model. Let $x$ be the generated video based on the representation of a real video $x_{real}$ with the label $y$ or directly conditioned on the label $y$ (e.g., MoCoGAN). 
	We have a classifier that is pretrained to predict the labels of real videos. 
	\begin{itemize}
		\item \textbf{Classification Accuracy} ($Acc$) measures the percentage of the agreement of predicted labels between the generated video $x$ and the given real video $x_{real}$. Higher classification accuracy indicates that  the generated video is more recognizable and  the corresponding representation is better disentangled from other representation.

		\item \textbf{Inception Score} $IS$ measures the KL divergence between the conditional label distribution $p(y|x)$ and the marginal distribution $p(y)$. 
		
		\begin{equation}
		IS = \exp(\E_{p(x)}[ KL(p(y|x) || p(y))])
		\end{equation}
		
		\item \textbf{Inter-Entropy} $H(y)$ is the entropy of the marginal distribution $p(y)$:
		\begin{equation}
		H(y) = - \sum_y p(y) \log p(y),
		\end{equation}
		where $p(y) = \frac{1}{N} \sum^N_{i=1} p(y|x)$. Higher $H(y)$ means the model generates more diverse results. 
		
		\item \textbf{Intra-Entropy} $H(y|x)$ is entropy of the conditional class distribution $p(y|x)$. 
		\begin{equation}
		H(y|x) = - \sum_y p(y|x) \log p(y|x), 
		\end{equation}
		
		Lower $H(y|x)$ indicates the generated video is more realistic. 
	\end{itemize}
	
	For the speaker verification task in the audio dataset TIMIT, we use the metric \textbf{Equal Error Rate(EER)}. The threshold value is tuned to make the false acceptance rate equal to the false rejection rate. The common value is referred to as the Equal Error Rate.

	\section{Details on Architecture and Training} 
	We implement our model using PyTorch~\cite{paszke2017automatic} and use the Adam optimizer~\cite{kingma2014adam} with $\beta_1 = 0.9$ and $\beta_1 = 0.999$. The learning rate is set to $10^{-3}$ and the batch size is set to 16. Our model is trained with 1000 epochs for each dataset on a GTX 1080 Ti GPU. 
	
	The detailed architecture description of the encoder and decoder of our S3VAE is summarized in Table~\ref{table:arch}. The visual feature of 128d from the frame encoder is fed into an LSTM with one hidden layer (256d) and the LSTM outputs the parameters $\mu$ and $\sigma$ for the Gaussian multivariate distribution of the static variable $\bm{z}_f$ of dimension $d_{zf}$.  The output of the LSTM is fed into another LSTM with one hidden layer (256d) to produce the parameters $\mu$ and $\sigma$ for the Gaussian multivariate distribution of the dynamic variable $\bm{z}_t$ of dimension $d_{zt}$ for each time step.
	Besides, we adopt a trainable LSTM to parameterize the prior of the dynamic variable, $\phi_R^{prior}$. 
	
	The dimensionality of latent variables $(d_{zf}, d_{zt})$ is set to $(256,32), (256,32), (8,128)$ for SMMNIST, Sprite, MUG, respectively. The balancing parameters $\lambda_1$   $\lambda_2$ and $\lambda_3$ are set to 1000, 100, 1, respectively, for all datasets.

	\begin{table}
		\caption{Frame Encoder and decoder of S3VAE for SMMNIST, Sprite, MUG datasets. Let sd denote stride, pd, padding; ch, channel; lReLU, leakyReLU. $d_{img}$ is the number of image channels, which is 1 for SMMNIST and 3 for Sprite and MUG datasets. }
		
		\label{table:arch}
		\centering
		\scalebox{0.9}{
			\begin{tabular}{ll}
				\hline 
				Encoder        &   Decoder \\ \hline
				Input 64x64 RGB image   & Input z       \\
				4x4 conv(sd 2, pd 1, ch 64)   & 4x4 convTrans(sd 1, pd 0, ch 512) \\ 
				BN, lReLU(0.2),               & BN, ReLU, upsample \\
				4x4 conv(sd 2, pd 1, ch 128)  & 3x3 conv(sd 1, pd 1, ch 256)      \\ 
				BN, lReLU(0.2),               & BN, ReLU, upsample  \\
				4x4 conv(sd 2, pd 1, ch 256)  & 3x3 conv(sd 1, pd 1, ch 128)   \\
				BN, lReLU(0.2),               & BN, ReLU, upsample \\
				4x4 conv(sd 2, pd 1, ch 512)  & 3x3 conv(sd 1, pd 1, ch 128) \\
				BN, lReLU(0.2),               & BN, ReLU, upsample \\ 
				4x4 conv(sd 1, pd 0, ch 128)  & 3x3 conv(sd 1, pd 1, ch 64) \\
				BN, Tanh                      & BN, ReLU \\
				& 1x1 conv(sd 1, pd 0, ch $d_{img}$)\\ &sigmoid \\
				\hline 
			\end{tabular}
		}
	\end{table}

	\section{Representation swapping on audio data}
	
	We now show the qualitative results of representation swapping. %To qualitatively study the disentangled information between the time-variant factor and time-invariant factor, 
	%we generate new data x by varying each of them respectively. 
	In Figure~\ref{audio_swapping}, we show the results of representation swapping.  Each heatmap shows the mel-scale filter bank features of 200ms in the frequency domain, where the x-axis is temporal with 20 steps, and the y-axis represents the value of frequency.  As marked in the black rectangle, 24 examples are generated by combining four static variables extracted from the samples in the first column and six dynamic variables extracted from samples in the first row.
	
	As can be observed, in each column, the linguistic phonetic-level contents, reflected by the formants along the temporal axis, are kept almost the same. On the other hand, the timbres are reflected as the harmonics in the heatmap, which correspond to horizontal light stripes. In each row,  the harmonics of heatmaps keep consistent, indicating the timbre of the speaker is preserved.  Overall, the results demonstrate the ability of our model to disentangle the representation of audio data.

	\begin{figure}
		\centering 
		\includegraphics[width=0.48\textwidth]{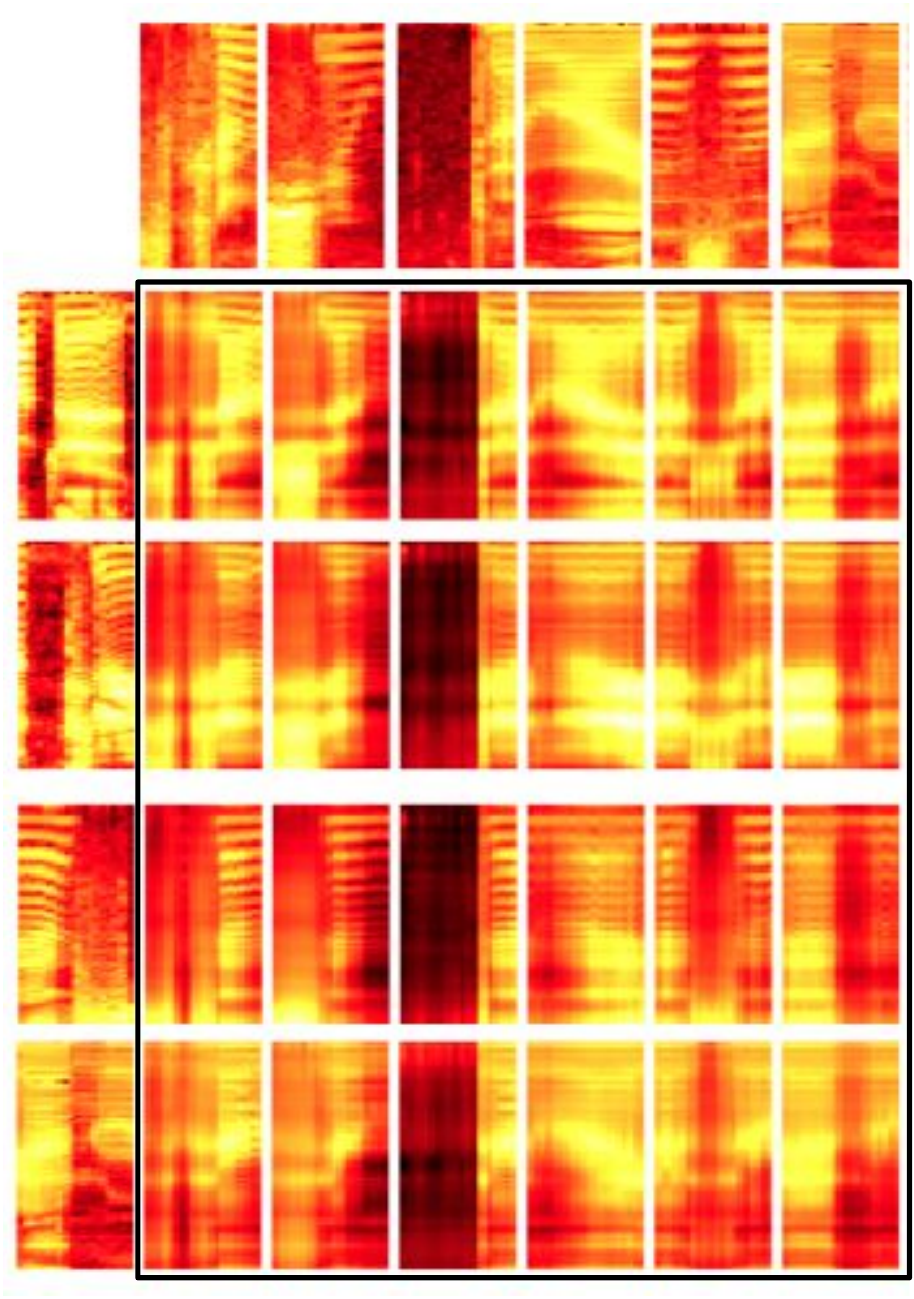}
		\caption{Representation swapping. Each heatmap shows the mel-scale filter bank features of 200ms in the frequency domain, where the x-axis is temporal with 20 steps, and the y-axis reflects the value of frequency. The first row shows the real data where $\bm{z}_{1:T}$, encoding linguistic content,  is extracted while the first column is the real data where $\bm{z}_f$, encoding timbre, is extracted. Each of the rest sequential data is generated based on the axis-corresponding $\bm{z}_f$ and  $\bm{z}_{1:T}$.}
		\label{audio_swapping}
	\end{figure}

	\section{More Qualitative Results}
	We report additional qualitative results on representation swapping in Figure~\ref{s1},~\ref{s2} and ~\ref{s3}. These qualitative results further illustrate the ability of our method to disentangle the static and dynamic representations. As can be seen, the generated videos follow the motion of $\bm{V}_m$ while preserving the appearance of the $\bm{V}_a$. 
	
	Besides, to validate the effectiveness of our method on video generation manipulation, we show qualitative results of video generation with fixed representation. Specifically, the videos first are generated by fixing the static representation $\bm{z}_f$  and sampling the dynamic representation $\bm{z}_{1:T}$. As shown in \ref{m1a},~\ref{m2a} and~\ref{m3a}, the generated videos have the sames appearance but perform various motions. 
	
	Then the videos are generated by fixing the dynamic representation $\bm{z}_{1:T}$ and sampling  the static representation $\bm{z}_f$. As shown in ~\ref{m1b},~\ref{m2b} and ~\ref{m3b}, the generated videos have various appearance but perform the same motions.

	\clearpage
	\begin{figure*}
		\centering 
		\includegraphics[width=0.9\textwidth]{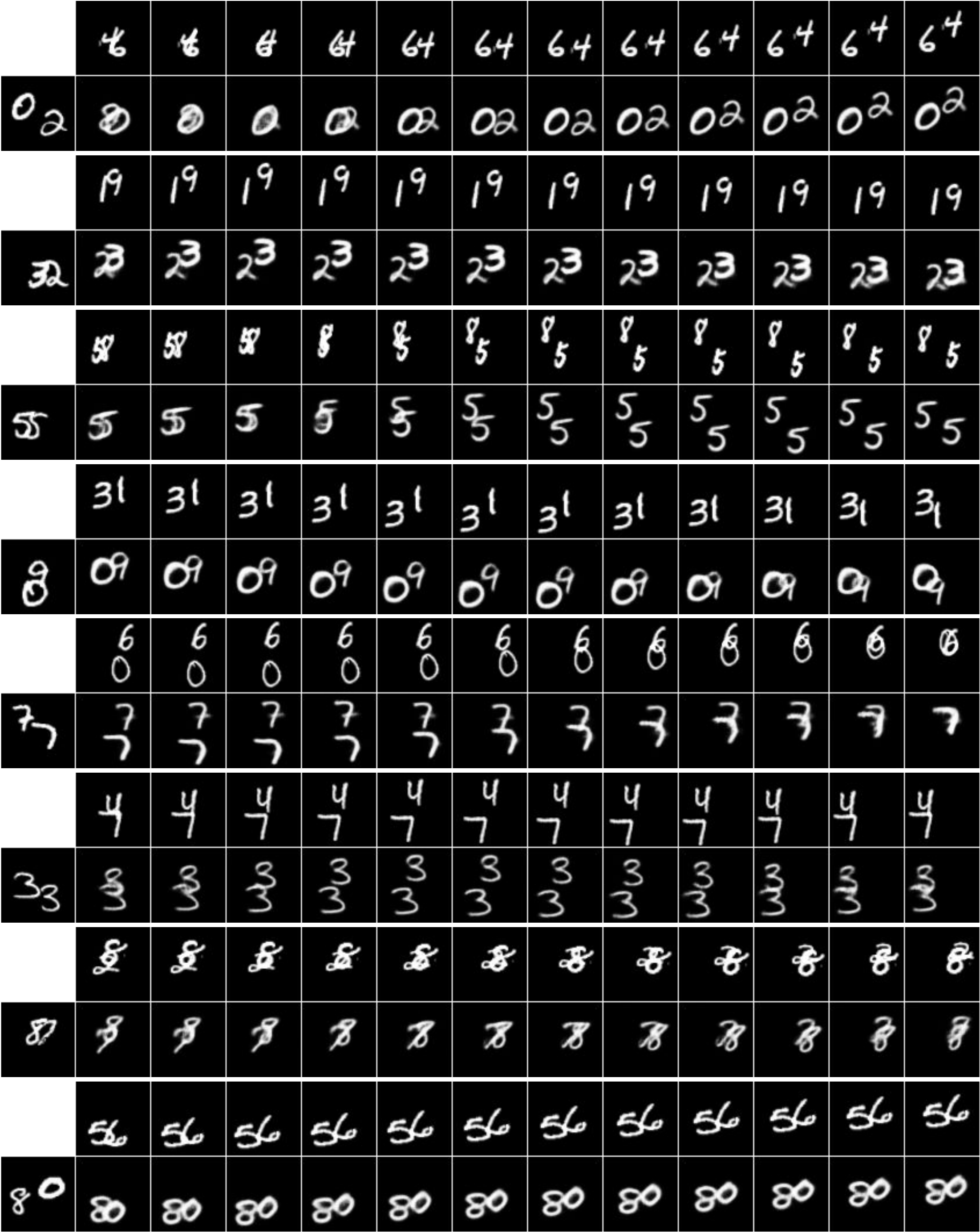}
		\caption{Qualitative results for Representation swapping on SMMNIST dataset. In each panel, the first row is $\bm{V}_m$ that provides the dynamic representation $\bm{z}_{1:T}$ and the first image of the second row is one frame of $\bm{V}_a$ that provides the static representation $\bm{z}_f$. The video generated based on  $\bm{z}_{1:T}$ and $\bm{z}_f$ is shown in the second row. }
		\label{s1}
	\end{figure*}
	\begin{figure*}
		\centering    
		\begin{subfigure}[b]{0.9\textwidth}
			\includegraphics[width=\textwidth]{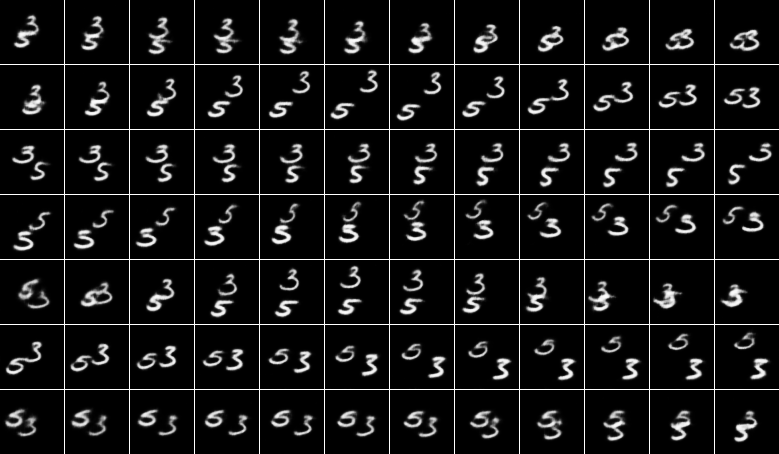}
			\caption{Videos generated by fixing the static representation $\bm{z}_f$  and sampling the dynamic representation $\bm{z}_{1:T}$. Each row shows one generated video sequence. All videos show the same digits, which moves in various directions in different videos.}
			\label{m1a}
		\end{subfigure}	\\
		\begin{subfigure}[b]{0.9\textwidth}
			\includegraphics[width=\textwidth]{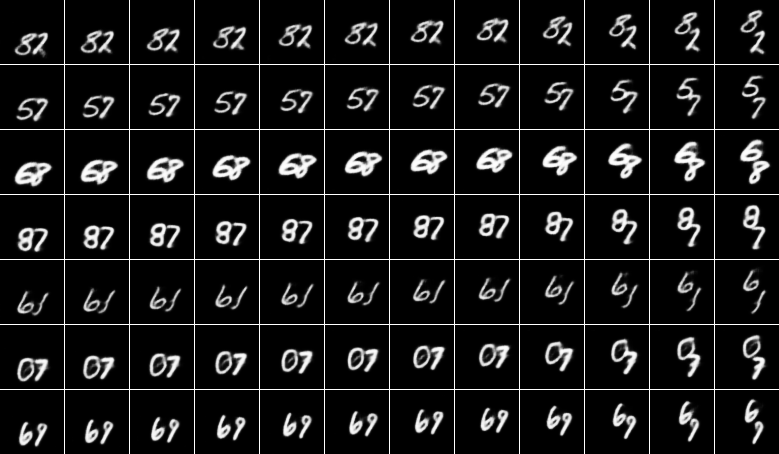}
			\caption{Videos generated by fixing the dynamic representation $\bm{z}_{1:T}$ and sampling the static representation $\bm{z}_f$.  Each row shows one generated video sequence. Different videos show various digits, which perform the same motion.}
			\label{m1b}
		\end{subfigure}
		
		\caption{Manipulating video generation on SMMNIST dataset.}
		\label{m1}
	\end{figure*}
	
	\begin{figure*}
		\centering 
		\includegraphics[width=0.6\textwidth]{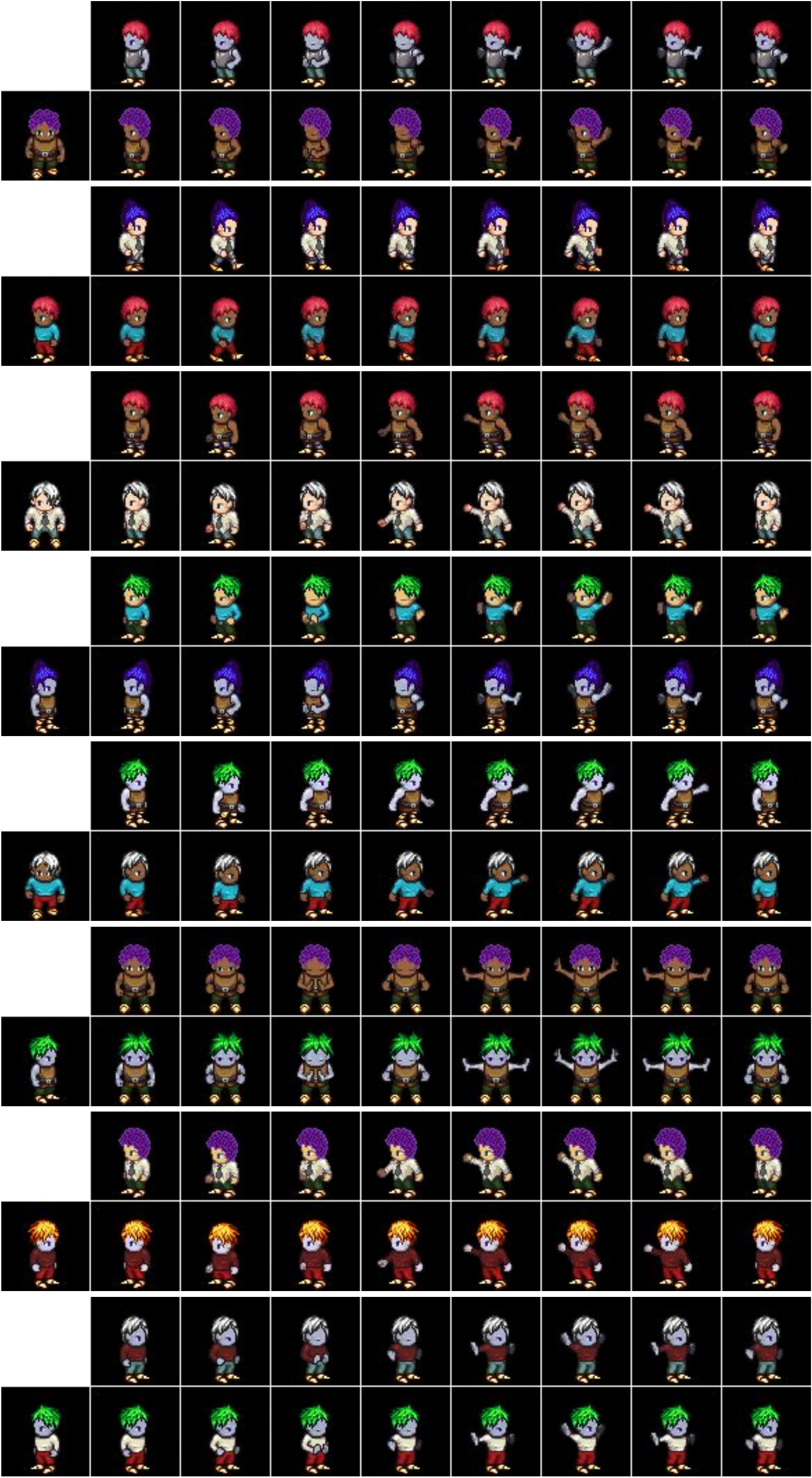}
		\caption{Qualitative results for Representation swapping on Sprite dataset. In each panel, the first row is $\bm{V}_m$ that provides the dynamic representation $\bm{z}_{1:T}$ and the first image of the second row is one frame of $\bm{V}_a$ that provides the static representation $\bm{z}_f$. The video generated based on  $\bm{z}_{1:T}$ and $\bm{z}_f$ is shown in the second row.}
		\label{s2}
	\end{figure*}
	\begin{figure*}
		\centering    
		\begin{subfigure}[b]{0.58\textwidth}
			\includegraphics[width=\textwidth]{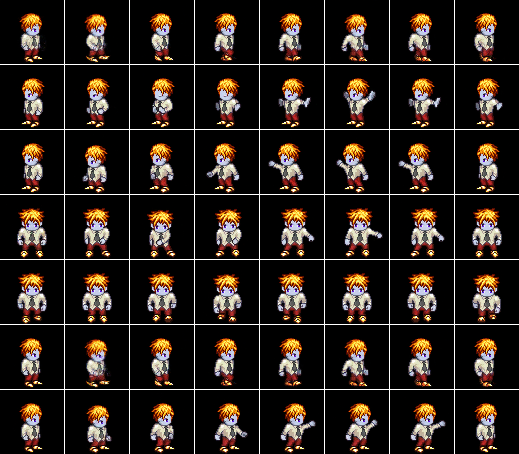}
			\caption{Videos generated by fixing the static representation $\bm{z}_f$  and sampling the dynamic representation $\bm{z}_{1:T}$. Each row shows one generated video sequence. All videos show the same character, which performs various actions in various directions.}
			\label{m2a}
		\end{subfigure}	\\
		\begin{subfigure}[b]{0.58\textwidth}
			\includegraphics[width=\textwidth]{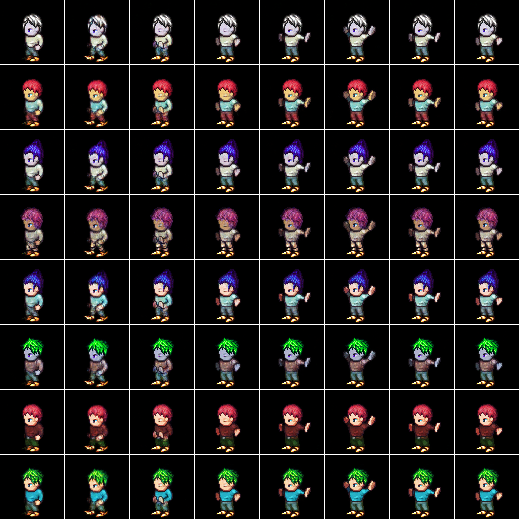}
			\caption{Videos generated by fixing the dynamic representation $\bm{z}_{1:T}$   and sampling the static representation $\bm{z}_f$.  Different videos show various characters, which perform the same motion towards the same direction.}
			\label{m2b}
		\end{subfigure}
		
		\caption{Manipulating video generation on Sprite dataset.}
		\label{m2}
	\end{figure*}

	\begin{figure*}
		\centering 
		\includegraphics[width=0.9\textwidth]{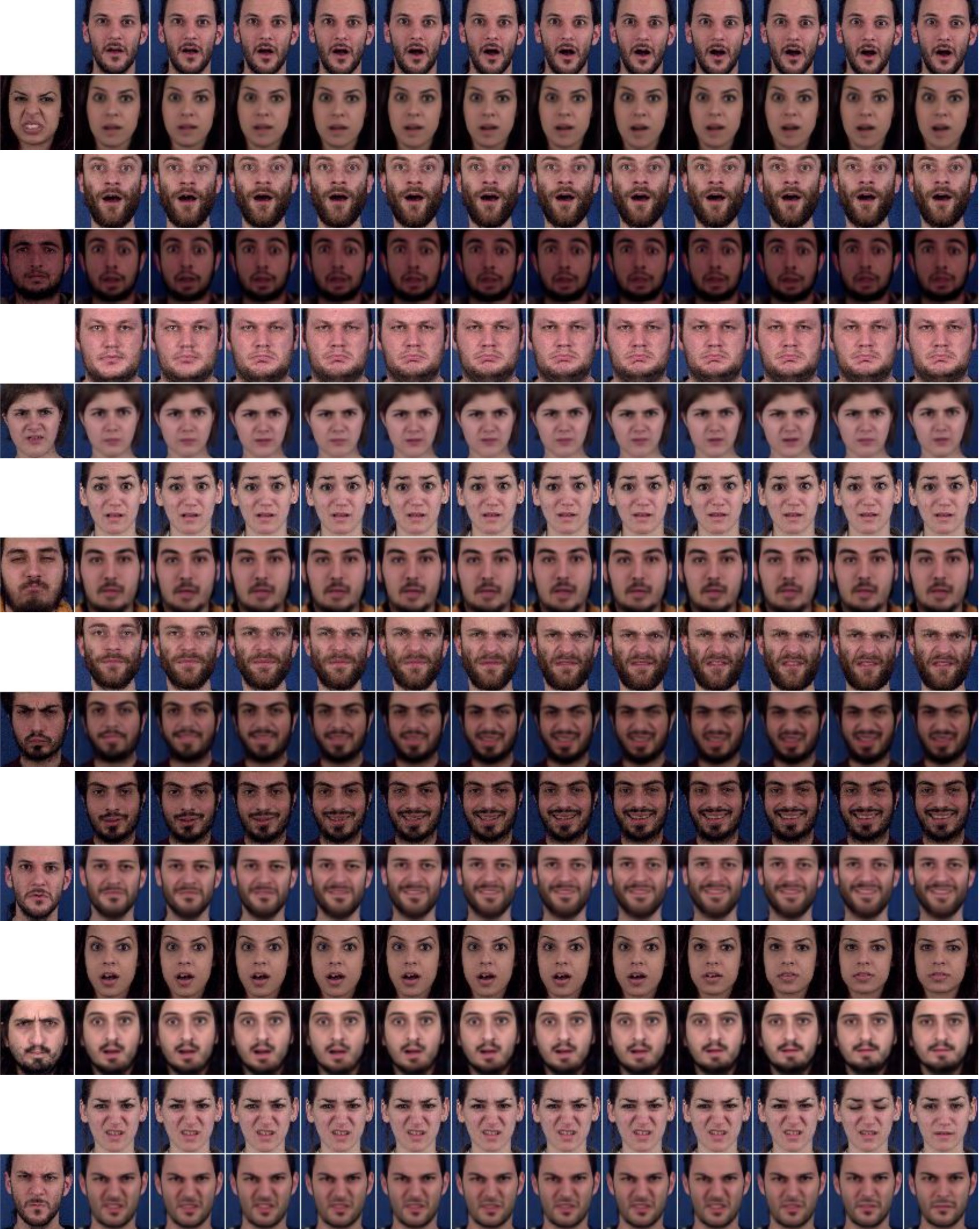}
		\caption{Qualitative results for Representation swapping on MUG dataset. In each panel, the first row is $\bm{V}_m$ that provides the dynamic representation $\bm{z}_{1:T}$ and the first image of the second row is one frame of $\bm{V}_a$ that provides the static representation $\bm{z}_f$. The video generated based on  $\bm{z}_{1:T}$ and $\bm{z}_f$ is shown in the second row.}
		\label{s3}
	\end{figure*}
	
	\begin{figure*}
		\centering    
		\begin{subfigure}[b]{0.9\textwidth}
			\includegraphics[width=\textwidth]{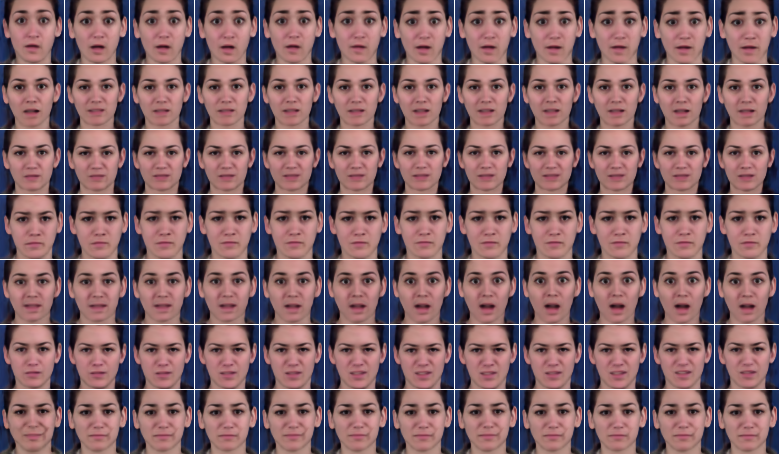}
			\caption{Videos generated by fixing the static representation $\bm{z}_f$  and sampling the dynamic representation $\bm{z}_{1:T}$. Each row shows one generated video sequence. All videos show the same woman, which performs various expressions.}
			\label{m3a}
		\end{subfigure}	\\
		\begin{subfigure}[b]{0.9\textwidth}
			\includegraphics[width=\textwidth]{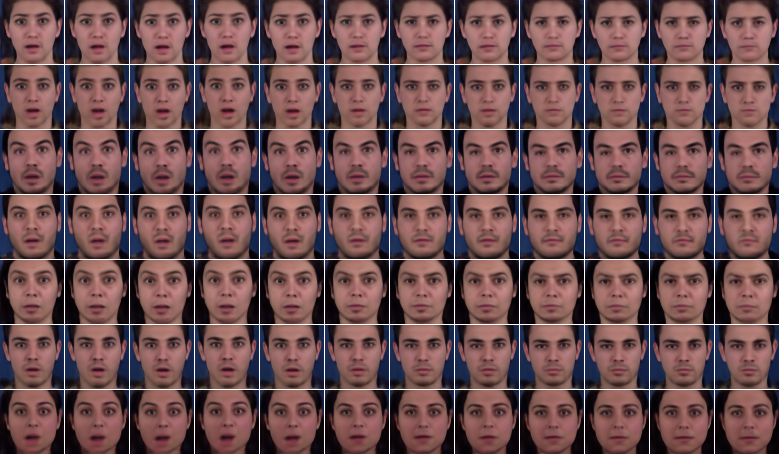}
			\caption{Videos generated by fixing the dynamic representation $\bm{z}_{1:T}$   and sampling the static representation $\bm{z}_f$. Each row shows one generated video sequence. Different videos show different persons, which perform the expression of surprise.}
			\label{m3b}
		\end{subfigure}
		
		\caption{Manipulating video generation on MUG dataset.  }
		\label{m3}
	\end{figure*}
\end{appendices}

\end{document}